\def\BibTeX{{\rm B\kern-.05em{\sc i\kern-.025em b}\kern-.08em
    T\kern-.1667em\lower.7ex\hbox{E}\kern-.125emX}}
\def\eqref#1{equation~\ref{#1}}
\def\1{\bm{1}}
\DeclareMathAlphabet{\mathsfit}{\encodingdefault}{\sfdefault}{m}{sl}
\SetMathAlphabet{\mathsfit}{bold}{\encodingdefault}{\sfdefault}{bx}{n}
\newtheorem{lemma}{Lemma}
\begin{document}

\title{CRTRE: Causal Rule Generation with Target Trial Emulation Framework}

\author{Junda Wang\textsuperscript{*}, Weijian Li\textsuperscript{†}, Han Wang\textsuperscript{‡}, Hanjia Lyu\textsuperscript{†}, Caroline P. Thirukumaran\textsuperscript{†}, \\ Addisu Mesfin\textsuperscript{†}, Hong Yu\textsuperscript{§}, and Jiebo Luo\textsuperscript{†} \\
\IEEEauthorblockA{\textit{\textsuperscript{*}University of Massachusetts Amherst, MA, USA} \\
\textit{\textsuperscript{†}University of Rochester, NY, USA}\\
\textit{\textsuperscript{‡}Shanghai Maritime University, Shanghai, China}\\
\textit{\textsuperscript{§}University of Massachusetts Lowell, MA, USA}\\
\textit{jundawang@umass.edu, \{weijianatusa, hanakookqwq\}@gmail.com,
hlyu5@ur.rochester.edu,}\\
\textit{\{caroline\_thirukumaran, addisu\_mesfin\}@urmc.rochester.edu, Hong.Yu@umassmed.edu, jluo@cs.rochester.edu}
}
}

\maketitle


\begin{abstract}
Causal inference and model interpretability are gaining increasing attention, particularly in the biomedical domain. Despite recent advance, decorrelating features in nonlinear environments with human-interpretable representations remains underexplored. In this study, we introduce a novel method called causal rule generation with target trial emulation framework ($CRTRE$),
which applies randomize trial design principles to estimate the causal effect of association rules. We then incorporate such association rules for the downstream applications such as prediction of disease onsets.
Extensive experiments on six healthcare datasets, including synthetic data, real-world disease collections, and MIMIC-III/IV, demonstrate the model's superior performance. Specifically, our method achieved a $\beta$ error of 0.907, outperforming DWR (1.024) and SVM (1.141). On real-world datasets, our model achieved accuracies of 0.789, 0.920, and 0.300 for Esophageal Cancer, Heart Disease, and Cauda Equina Syndrome prediction task, respectively, consistently surpassing baseline models. On the ICD code prediction tasks, it achieved AUC Macro scores of 92.8 on MIMIC-III and 96.7 on MIMIC-IV, outperforming the state-of-the-art models KEPT and MSMN. Expert evaluations further validate the model's effectiveness, causality, and interpretability.

\end{abstract}

\begin{IEEEkeywords}
Causal Inference, Association Rule, Target Trial Emulation
\end{IEEEkeywords}

\section{Introduction}
With the rapid growth of Machine Learning (ML) in the healthcare domain, 
methods have shown encouraging capability for solving a broad range of clinical tasks, such as disease understanding, diagnosis, and treatment planning, by leveraging a large number of Electric Health Record (EHR) data. Although these methods bring benefits to both patients and healthcare professionals~\citep{herpertz2017challenge, li2020predicting},  increased concerns on judgment errors~\citep{royce2019teaching,gandhi2006missed} as well as deficiency of understanding the workflow of ML systems~\citep{croskerry2013mindless} have become major road-blockers for the development and deployment of ML-based healthcare systems. An important factor underlines the aforementioned challenges is interpretability, i.e., that the black-box ML models are often associated with a limited capacity for performance analysis~\citep{ahmad2018interpretable}. Therefore, building interpretable ML models for healthcare becomes an imperative research direction. 

Numerous methods aimed at improving model interpretability have recently emerged, focusing on simplifying complex models to provide clearer insights into their decision-making processes~\citep{du2019techniques, zafar2019dlime, ribeiro2016model}. However, explanations of  black-box models often cannot be perfectly faithful to the original models and leave out much information which cannot be made sense of~\citep{rudin2019stop}. In addition, traditional ML models might be influenced by the data they are trained on, leading to unexpected bias and overfitting problems when applied to real-world environments.

Recently, methods propose associative inference~\citep{yu2024survey, zhang2021deep, kuang2020causal}, which has achieved promising results. However, these methods recognize diseases based on correlations and probability among patients' symptoms and medical history, while doctors prefer to diagnose 
according to 
the best causal explanations~\citep{imbens2015causal}. To help identify causal explanations, several methods have been proposed to address the agnostic distribution, including domain generalization which is becoming one of the most prominent learning paradigms~\citep{muandet2013domain}. Another study examines the distribution shift issue from a causal perspective, such as causal transfer learning~\citep{rojas2018invariant} and Structural Causal Model (SCM)~\citep{pearl2009causal} to identify causal variables based on the conditional independence test. Other researchers focus on more general methods under the stability guarantee by variable decorrelation through sample reweighting such as DWR~\citep{kuang2020stable, cui2022stable, kuang2020stable, zhang2021deep, kuang2018stable, kuang2021balance}. They leveraged co-variate balancing to eliminate the impact of confounding, assessing the effect of the target feature by reweighting the data so that the distribution of covariates is equalized across different target feature values. In spite of their advantageous analytical qualities, these approaches are usually limited to linear environments or binary datasets, and rarely employed in high-dimensional real-world applications due to the complex causal graph and strict assumptions. 

To address the aforementioned challenges, we propose a novel method, causal rule generation with target trial emulation framework ($CRTRE$),
which is interpretable and effective in both linear and nonlinear environments for stable prediction. CRTRE utilizes an association rule mining algorithm to extract rules as model features. To capture interactions among features, we employ a function \(F(x)\) and perform a Taylor expansion. A key aspect of our method is the use of a target trial emulation framework~\citep{hernan2022target}, 
to identify 
independence among variables in both linear and nonlinear context, which is essential for learning causal relationships accurately. Unlike previous methods that primarily eliminated linear relationships, CRTRE addresses nonlinear dependencies, leading to a more comprehensive understanding of variable interactions and ensuring stability in predictions. Our results show that CRTRE not only enhanced interpretability but also improved the performance of a broad range of clinical applications built upon both traditional ML and the recent AI models.

Experiments on synthetic datasets demonstrated that our method significantly improved model parameter estimation and enhanced prediction stability and effectiveness across varying distributions in nonlinear environments, thereby proving its superiority over existing approache ~\citep{kuang2020stable, kuang2020causal}
We also show that CRTRE improved prediction of disease onsets. Specifically, we evaluated CRTRE on three diverse real-world datasets  and demonstrated that our method substantially outperformed other state-of-the-art models such as SVM. 
To evaluate and compare CRTRE with the recent AI methods, we evaluated our method on the ICD code prediction task. 
CRTRE demonstrated superior performance, surpassing other state-of-the-art deep-learning-based models such as KEPT~\citep{yang2023multi, yang2022knowledge} and MSMN~\citep{yuan2022code} on both MIMIC-III~\citep{wang2020mimic} and MIMIC-IV datasets~\citep{johnson2023mimic}. 
Finally, to evaluate whether CRTRE outputs better association rules (for causality and interpretability), we asked three physicians with expertise in Cardiology, ENT, and Neurosurgery, respectively to rate the identified association rules. Our results show that physicians provided more favorable agreement with the association rules our method identified than existing methods~\citep{cui2022stable}.  

The main contributions of our work are as follows:

\begin{enumerate}
    \item[(1)] We expand the stable learning problem from linear environment to a nonlinear environment 
    so that stable learning can be widely applied in the real world. As far as we know, this is the first work in this direction.
    \item[(2)] This is the first study where we extend association rule generation with target trial emulation framework for causal rule identification. 
    \item[(3)] We demonstrate the superiority of CRTRE on both synthetic and real-world medical datasets for a broad range of clinical applications.
     \item[(4)] Our results show that physicians prefer our association rules for decision making, illustrating the clinical applicability of CRTRE. 
\end{enumerate}

\section{Related Work}
\subsection{Interpretability in Healthcare}

Increasing efforts have been devoted to Machine Learning (ML) interpretability, which is essential for healthcare applications. 
Among them, Generalized Additive Models (GAM)~\citep{hastie2017generalized} are a set of classic methods with univariate terms providing straightforward interpretabilities. GA$^{2}$M-model~\citep{lou2013accurate} brings additional capability for real-world datasets with the selected interacting pairs based on GAMs. 
\citep{lee2001review} identified association rules to assist physicians for clinical diagnoses and treatment plans of their patients. 

\citep{ahmed2021deep} applied association rules to identify important features from clinical images. \citep{sornalakshmi2021efficient} developed methods to incorporate association rules as features for downstream healthcare applications. However, most aforementioned methods generated association rules based on joint probability distribution, which has limited causal inference.

\subsection{Association Rule Mining}
Association rule mining identifies important association rules for downstream applications such as predictive modeling
~\citep{perccin2019arm,ordonez2006constraining}. 
We define symptoms $X$ as $X={x_{1}, x_{2},...,x_{n}}$ and disease outcome $Y$. 
$X \Rightarrow Y$ indicates that the disease $Y$ is related to the symptoms $X$.
We deployed three metrics to evaluate the significance of rules: $\emph{support}(X)=P(X)$ is the probability that the set appears in the total item set; $\emph{confidence}(X \Rightarrow  Y)=\emph{support}(X\cap  Y)/\emph{support}(X)$ is a measure of reliability; $\emph{lift}(X \Rightarrow Y)=\emph{confidence}(X \Rightarrow Y)/\emph{support}(Y)$ reflects the correlation between $X$ and $Y$ in the association rules~\citep{srikant1997mining}. In each association rule, $X$ is antecedent and $Y$ is the consequent. A rule that has a higher support and confidence has a stronger association rule. We defined a threshold of support and confidence. If the association rule passes the threshold, then we classify the rule as \textit{strong} association rule. 

Each association rule has two attributes: causality and association. An association may exhibit both high causality and association, or it may have a high association but low causality. Identifying the causal significance of an association rule, however, remains a critical challenge, particularly in contexts like diagnosis systems where causality is crucial. For example, traditional algorithms like Apriori~\citep{borgelt2002induction} and FP-Growth~\citep{han2000mining} focus on generating strong association rules based on correlations, often disregarding causality. These algorithms, though efficient in identifying frequent patterns or associations, do not address the need for causal understanding. Furthermore, \citep{yuan2017improved} proposed enhancements to the Apriori algorithm for greater efficiency, yet this method still focuses on correlation-based association rules. In medical diagnosis systems, this leads to inconsistencies between the rules generated by such algorithms and the causal reasoning doctors rely on. Therefore, developing methods to extract rules that reflect true causal relationships, rather than mere associations, remains an important challenge.

\begin{figure*}[t]
	\includegraphics[width=1\textwidth]{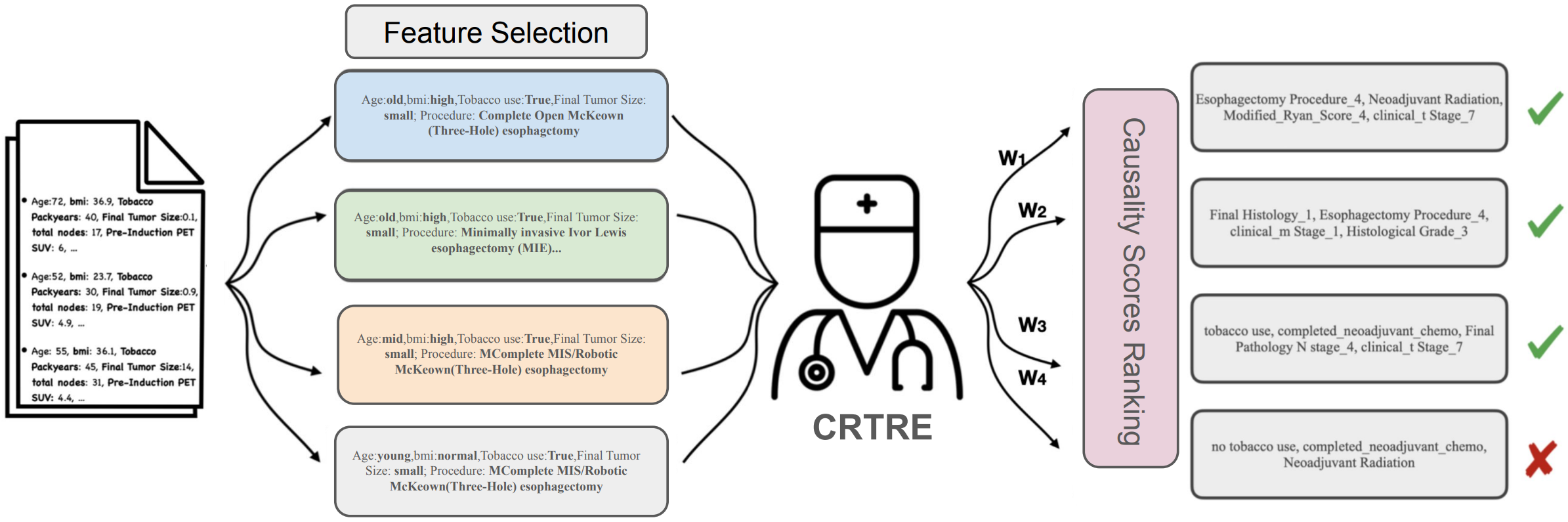}
	\centering
	\caption{We first extracted the features from clinical notes in tabular form, filtering out irrelevant attributes to focus on the most pertinent variables. Using the Apriori algorithm, we then generated association rules from the dataset, identifying significant associations between medical conditions, symptoms, diseases and treatments. After generating the initial set of rules, we pruned redundant or irrelevant ones to ensure relevance and quality. Finally, we applied our novel regularizer to score each rule, assessing its clinical relevance and statistical significance. This combination of the Apriori algorithm and our regularizer produced a concise, meaningful set of association rules, offering valuable insights for clinical decision-making.}
	\label{sample}
\end{figure*}

\subsection{Causal Inference}


One key challenge in healthcare is dealing with both observed and unobserved confounders across different environments~\citep{cui2022stable}. This has led to the growing popularity of causal inference methods, which are well-suited to addressing such issues. For instance, causal inference approaches that utilize network and hierarchical structures enable researchers to derive causal explanations from data~\citep{pearl2018theoretical}. A completely constructed causal graph among various features based on an unconfoundedness assumption that helps to reduce the influence of confounders~\citep{ma2021assessing}. 

For many machine learning algorithms, ensuring reliable performance depends on meeting two key assumptions. The first assumes that the test data is drawn from the same distribution as the training data, and the second requires the model to be correctly specified.  Nevertheless, in practical applications, our understanding of the test data and the underlying true model is typically limited. In situations where model specification is erroneous, the unknown distributional shift between training and test data results in inaccuracy in parameter estimation and instability in predictions on unknown test data. To address these problems, a Differentiated Variable Decorrelation (DVD) algorithm is proposed to eliminate the correlations of various variables in different environments by reweighting datasets~\citep{shen2020stable}. Moreover, \citep{xu2021stable} prove the effectiveness of stable learning and demonstrates the necessary of the stable prediction. 

A growing area of interest in machine learning is stable learning, which focuses on building models that perform consistently across different environments. Given various environments $\mathbf{e} \in E$ within datasets $D^{\mathbf{e}} = (X^{\mathbf{e}}, Y^{\mathbf{e}})$, the goal is to train a predictive model in one environment, $e_{i}$, that achieves uniformly 
minimal
error in a different environment, $e_{j}$. This is accomplished by learning the causal relationships between features $X^{e_{i}}$ and targets $Y^{e_{i}}$ under environment $e_{i}$. Researchers propose the Deep Global Balancing Regression (DGBR) algorithm~\citep{kuang2018stable} and  Decorrelated Weighting Regression (DWR) algorithm~\citep{kuang2020stable, cui2022stable} for stable prediction across unknown environments. They successively regard each variable as a treatment variable by using a balancing regularizer with theoretical guarantee.

In Equation~\ref{balancer}, $W$ is sample weight, $\mathbf{X}_{\cdot,j}$ is the $j^{th}$ variable in $\mathbf{X}$, and $\mathbf{X}_{\cdot,-j} = X/\{\mathbf{X}_{\cdot,j}\}$. With the global balancing regularizer in Equation~\ref{balancer}, a Global Balancing Regression algorithm is proposed to optimize global sample weights and causality for classification task.
\begin{equation}
\label{balancer}
\begin{array}{ll}
\min & \sum_{i=1}^{n} W_{i} \cdot \log \left(1+\exp \left(\left(1-2 Y_{i}\right) \cdot\left(\mathbf{X}_{i} \beta\right)\right)\right) \\
\text { s.t. } & \sum_{j=1}^{p}\left\|\frac{\mathbf{x}_{-j}^{T} \cdot(W \odot \mathbf{X}\cdot, j)}{W^{T} \cdot \mathbf{X}_{\cdot, j}}-\frac{\mathbf{X}_{i-j}^{T}(W \odot(1-\mathbf{X}\cdot, j))}{W^{T} \cdot\left(1-\mathbf{X}_{\cdot}, j\right)}\right\|_{2}^{2} \leq \lambda_{1}
\end{array}
\end{equation}

However, deploying the above algorithms on real-world datasets pose challenges. Methods like DGBR or DWR are designed to remove linear confounding by focusing on linear environments, which limits their effectiveness. In contrast, our approach not only handles linear relationships but also effectively eliminates nonlinear confounding, making it more suitable for complex real-world scenarios.

Target trial emulation has recently gained prominence as a robust methodology in observational studies, particularly for emulating randomized controlled trials (RCTs) where direct experimentation is unfeasible. ~\citep{hernan2016using} have highlighted the importance of this approach, emphasizing the structured design of observational analyses to mimic an RCT, thus improving causal inference. 
By explicitly specifying the trial protocol, such as eligibility criteria, treatment strategies, and follow-up procedures, researchers aim to reduce biases, including immortal time bias and confounding by indication. In this study, we incorporate the target trial emulation framework for identifying causal association rules. 

\section{Method}

We propose an interpretable model based on association rules and causal inference for EHR datasets to obtain the causality between features through a three-stage process.
\begin{enumerate}[label=(\roman*)]
    \item \textbf{Association and Transformation Rules Mining} We initially apply Apriori
    to generate the association rules.
    \item \textbf{Rule Selection} Since Apriori generates rules where redundancy is common, we develop rule-selection algorithm to select non-redundant rules.
    \item \textbf{Causality Computation} We introduce a novel $causal score$ to compute causality significance. Our method attempts to minimize the influence of confounding variables using the principles of emulation trial framework. Specifically, we followed inverse probability of censoring weighting by employing statistical techniques such as propensity score matching or adjustment methods to mitigate the confounding challenge.  
    By controlling for potential confounders, causal score reflects a more accurate estimation of the direct relationship, allowing rules with higher scores to be prioritized as potential causal rules.
\end{enumerate}
In the following we describe the details of each step.

\subsection{Association and Transformation Rules Mining} Initially, we implement the Apriori algorithm~\citep{agrawal1994fast} to identify association rules and construct a rule matrix for both positive and negative samples. This method counteracts the asymmetrical distribution of data. Rule representations, denoted as $<\mathscr{A}{i}, \mathscr{C}{i}|\theta_{i}>$, are subsequently constructed, where $\mathscr{A}{i}$ represents the antecedent of the rule $R_{i}$, $\mathscr{C}{i}$ signifies the consequent of the rule, and $\theta{i}$ indicates the confidence of the rule. We define a frequency function to compute the confidence of each rule: $\{R\} = \{\cup_{i}R_{i}\}=\{\cup_{i}frequent(\mathscr{A}_{i} \cup \mathscr{C}_{i})\}$.
 According to the rules generated, rule sets $\cup_{i}\{X_{i}:<\mathscr{A}_{i}, \mathscr{C}_{i}|\theta_{i}>\}$ are built for dataset $D$ where each rule is considered as a feature. Rule sets are then transformed into a zero-one matrix $X$ leveraging one-hot encoding.

\subsection{Rule Selection} Massive rules could be generated during the mining process, causing redundancy or even negative effects. To extract rules with strong correlations between features, we introduce an integer programming objective function:
\vspace{-2mm}
\begin{equation}
\begin{array}{lll}
\label{selection}
\text { Min }& \left\|W\right\|_{2}^{2}+\left\|\max(0, 1- Yh(x)\right\|_{2}^{2}\\
&h(x) = (W^{T}X\odot rep(\mathbb{I}(R > 0), n) \theta + b) \\
\text { s.t. } & \sum_{i} \mathbb{I}(R_{i} > 0) \leq \lambda_{1} & \\
& \sum_{i} \mathbb{I}(R_{i} > 0) \geq \lambda_{2}\\
& \{R_{i}\} \in \text{\{0, 1\}} &
\end{array}\\
\end{equation}

Here, $\odot$ denotes the Hadamard product, and $\mathbb{I}(R > 0)$ is an indicator function that converts $R$, a set of rules, into a binary vector of dimension $1*r$. The value of the indicator function is one when the rule is selected; otherwise, it is zero. The estimated parameters of the $rep(\mathbb{I}(R > 0),n)$, repeat function, are represented by $W$, and $b$ symbolizes the estimated bias. The function is designed to expand the vector $\mathbb{I}(R^{1*r} > 0)$ into a matrix of dimension $n*r$, where all rows mirror the first row. $\lambda_{1}$ and $\lambda_{2}$ represent the boundaries for the number of selected rules.

As we exclusively consider a binary classification problem in this context, which is a typical setup for most healthcare diagnostic issues, we use the inverse of the confidence of the negative class rule as the score. However, the number of rules identified by the association rule algorithm, such as Apriori, can be extensive, leading to an exceptionally high dimension of $R$ that cannot fit into Equation~\ref{selection}. Consequently, we remove one redundant rule at a time during each n-fold cross-validation run, based on a feature ranking criteria $w_{i}^{2}$.

Despite the systematic removal of redundant rules, redundant items within rules can still influence the model's performance. To address this issue, we initiate an iterative process to remove one item from each rule at a time, which results in an updated $R$ with a reduced dimension. We then rebuild the cross-validation sets and input data into SVM models to achieve an average accuracy. At every iteration, the item that enhances the model's average accuracy most significantly will be deleted. The code is shown in Appendix~\ref{app_algorithm} 

For comparison, we also built baseline models (i.e., SVM, Random Forest, Boosting algorithms, Logistic Regression, Neural Network(MLP), and DWR), using the traditional feature selection method Backward-SFS~\citep{ferri1994comparative}. Our results show that our method outperformed all aforementioned baseline models~\ref{resultaccuracy}.  



\subsection{Causality Computation} We follow the principle of the target trial emulation framework.  Specifically, we implement based on the assumption of DWR model, where the principal objective lies in the decorrelation of variables to discern causality. To adeptly manage the nonlinear relationships pervasive in real-world scenarios, we model these relationships utilizing Taylor expansion, represented by a function $\mathcal{F}(x)$ as depicted in Equation~\ref{taylor}. The derivatives of each fixed point can be interpreted as parameters to be resolved by transforming them into a polynomial fitting problem. This transformation is validated by the condition stipulating the equality of two polynomials solely when their degree and coefficients align.
\vspace{-2mm}
\begin{equation}
\begin{array}{ll}
\label{taylor}
x_{p_{1}} \sim & \mathcal{F}(x_{j})=f_{p_{1}p_{2}}(x_{p_{2}}(0))+f'_{p_{1}p_{2}}(x_{p_{2}}(0)) x_{p_{2}}+\\
& \frac{f''_{p_{1}p_{2}}(x_{p_{2}}(0))}{2!}x_{p_{2}}^{2}+\ldots+\frac{f^{(p)}_{p_{1}p_{2}}(x_{p_{2}}(0))}{p!}x_{p_{2}}^{p}+R_{p}(x_{p_{2}})
\end{array}
\end{equation}

where $x_{p_{1}}(0)$ and $x_{p_{2}}(0)$ are two different features which are expanded at 0 by using Taylor expansion. The elimination of the impact of intersecting areas is achieved by balancing the weight $W$ as is represented in Equation~\ref{polynomial}. If $x_{p_{1}}$ and $x_{p_{2}}$ are independent and nonlinearly uncorrelated, the derivatives of their relation functions are all 0: $\left\|\{\mathscr{F}_{p_{2}\rightarrow p_{1}}\} \text { / } \{f_{p_{1}p_{2}}(x_{p_{2}}(0))\} \right\|$ = 0 where $\mathscr{F}_{p_{2}\rightarrow p_{1}}$ are the relationship function between $x_{p_{1}}(0)$ and $x_{p_{2}}(0)$ can be calculated using Equation~\ref{solution}
\begin{small}
\begin{equation}
\begin{array}{ll}
\label{polynomial}
&\min_{\mathscr{F}_{p_{2}\rightarrow p_{1}}} R_{p}(x)^{2}\equiv \sum_{p_{1} \neq p_{2}} \sum_{i=1}^{n}\left[w_{i}x_{i p_{2}}-\mathcal{F}\left(w_{i}x_{i p_{1}}\right)\right]^{2} \\ \\
& \Rightarrow\mathscr{X}_{p_{2}}\left(w_{i}x_{p_{2}}\right) \mathscr{F}_{p_{2} \rightarrow p_{1}}=\mathscr{Y}_{p_{1}}
\end{array}
\end{equation}
\end{small}
\begin{scriptsize}
\begin{equation*}
    \mathscr{X}_{p_{2}} = \left[\begin{array}{cccc}
n & \sum_{i=1}^{n} w_{i}x_{ip_{2}} &\cdots& \sum_{i=1}^{n} w_{i}^{k}x_{ip_{2}}^{k} \\
\sum_{i=1}^{n} w_{i}x_{ip_{2}} & \sum_{i=1}^{n} w_{i}^{2}x_{ip_{2}}^{2} &\cdots& \sum_{i=1}^{n} w_{i}^{k+1}x_{ip_{2}}^{k+1} \\
\vdots & \vdots &\ddots& \vdots \\
\sum_{i=1}^{n} w_{i}^{k}x_{ip_{2}}^{k} & \sum_{i=1}^{n} w_{i}^{k+1}x_{ip_{2}}^{k+1} &\cdots& \sum_{i=1}^{n} w_{i}^{2 k}x_{ip_{2}}^{2 k}
\end{array}\right]
\end{equation*}
\end{scriptsize}

\begin{equation*}
\mathscr{F}_{p_{2}\rightarrow p_{1}}=\left[\begin{array}{c}
f_{p_{1}p_{2}}(x_{p_{2}}(0)) \\
f'_{p_{1}p_{2}}(x_{p_{2}}(0)) \\
\vdots \\
f^{(p)}_{p_{1}p_{2}}(x_{p_{2}}(0))
\end{array}\right],
\mathscr{Y}_{p_{1}} = \left[\begin{array}{c}
\sum_{i=1}^{n} y_{i} \\
\sum_{j=1}^{n} x_{i} y_{i} \\
\vdots \\
\sum_{i=1}^{n} x_{i}^{k} y_{i}
\end{array}\right]
\end{equation*}

\begin{equation}
\label{solution}
\mathscr{F}_{p_{2}\rightarrow p_{1}} = \left(\mathscr{X}_{p_{2}}^{\mathrm{T}} \mathscr{X}_{p_{2}}\right)^{-1} \mathscr{X}_{p_{2}}^{\mathrm{T}} \mathscr{Y}_{p_{1}}
\end{equation}

where $w_{i}$ is the weight for each sample and $n$ is the number of datasets. Combined with Figure~\ref{sample}, physical meaning can be given to the above variables: $r(w)$ is the regularizer term; $C$ is the factor to expand the influence of the intersection area to get the real causality comparing with $W$ applied to eliminate the influence of the public area: 
\begin{scriptsize}
\begin{equation}
\begin{array}{llll}
\text{Min}&  \sum_{i=1}^n (W_{i}+C)(\left(-y_i \log \left(\hat{p}\left(X_i\right)\right)-
 \left(1-y_i\right) \log \left(1-\hat{p}\left(X_i\right)\right)\right))\\
& \hat{p}\left(X_i\right)=\operatorname{expit}\left(X_i w+w_0\right)=\frac{1}{1+\exp \left(-X_i w-w_0\right)}, r(w) \leq \lambda_{3} \\
&\|\mathscr{F}_{p_{2}\rightarrow p_{1}, i>0}^{(i)}\|_{2}^{2} \leq \gamma,
\|W\|_{2}^{2} \leq \lambda_{1},\left(\sum_{k=1}^{n} W_{k}-1\right)^{2} \leq \lambda_{2}&
\end{array}
\end{equation}
\end{scriptsize}
When we have a smaller $\gamma$ value, the difference between $\beta$ and the true correlation coefficient (disjoint region and the target) will become smaller, resulting greater mutual information loss. 
\vspace{-5mm}
\begin{lemma}
If the number of features in the datasets and the terms in the Taylor expansion are fixed, when $n \to \infty$ there exists $W \succeq 0$(the proof is shown in appendix) such that
\begin{equation*}
     \lim_{n \to \infty}\|\mathscr{F}_{p_{2}\rightarrow p_{1}, i>0}^{(i)}\|_{2}^{2}
\end{equation*}
\end{lemma}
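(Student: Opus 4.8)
The plan is to read the statement as asserting that $\lim_{n\to\infty}\|\mathscr{F}_{p_2\to p_1,\,i>0}^{(i)}\|_2^2 = 0$, i.e. that there exists a nonnegative weighting $W\succeq 0$ under which every non-constant Taylor coefficient of the fitted relationship $x_{p_1}\sim\mathcal{F}(x_{p_2})$ vanishes asymptotically, so that the reweighted features become nonlinearly decorrelated. Throughout I treat the number of features and the Taylor degree $p$ as fixed constants, so $\mathscr{F}_{p_2\to p_1}$ is a vector of fixed length and all moment sums in $\mathscr{X}_{p_2}$ and $\mathscr{Y}_{p_1}$ run only up to order $2p$.

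First I would pass from the empirical object to a population object. Starting from the closed form in Equation~\ref{solution}, I normalize the Gram matrix $\mathscr{X}_{p_2}$ and the cross-moment vector $\mathscr{Y}_{p_1}$ by $n$. Because the degree $p$ is fixed and the weights are assumed bounded (so each summand $w_i^{k}x_{ip_2}^{k}$ has a finite second moment up to order $2p$), the weak law of large numbers gives $\tfrac1n\mathscr{X}_{p_2}\to M(W)$ and $\tfrac1n\mathscr{Y}_{p_1}\to v(W)$, where $M(W)$ and $v(W)$ are the reweighted population moment matrix and cross-moment vector. Provided the reweighted variable attains at least $p+1$ distinct values with positive mass, $M(W)$ is a nonsingular Hankel/Vandermonde-type matrix, so by the continuous-mapping theorem $\mathscr{F}_{p_2\to p_1}\to \mathscr{F}^\star(W)=\big(M(W)^{\mathrm{T}}M(W)\big)^{-1}M(W)^{\mathrm{T}}v(W)$.

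Next I would characterize when the non-constant entries of $\mathscr{F}^\star(W)$ are zero. The least-squares projection of $x_{p_1}$ onto $\mathrm{span}\{1,x_{p_2},\dots,x_{p_2}^{p}\}$ collapses to the constant term precisely when the reweighted cross-moments factor, i.e. $\E_W[x_{p_1}x_{p_2}^{k}]=\E_W[x_{p_1}]\,\E_W[x_{p_2}^{k}]$ for $k=1,\dots,p$. Hence it suffices to exhibit one $W\succeq0$ for which these finitely many identities hold simultaneously for every ordered pair $(p_1,p_2)$. Here I would invoke the decorrelation-by-reweighting principle underlying DWR: choosing $w_i$ proportional to the density ratio $\tilde q(x_i)/\hat p(x_i)$, where $\hat p$ is the joint empirical law and $\tilde q$ the product of its marginals, pushes the sample toward the product-of-marginals measure, under which all features are independent and every cross-moment factors. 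Such weights are ratios of densities and hence nonnegative, giving $W\succeq0$.

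The main obstacle, and the step I would spend the most care on, is the joint existence in the previous paragraph: producing a single nonnegative weight vector that simultaneously annihilates all non-constant coefficients across every feature pair and every order up to $p$, while keeping the limiting matrix $M(W)$ invertible (otherwise the projection, and thus the claim, is ill-posed). I would establish this by combining the product-measure reweighting construction with a dimension-counting feasibility argument for the finite moment system (finitely many features, fixed degree $p$ yield only finitely many constraints), and verify positive-definiteness of $M(W)$ from the assumption that each feature is non-degenerate under the reweighted measure. Once such $W$ is secured, combining it with the population limit of the preceding paragraphs yields $\lim_{n\to\infty}\|\mathscr{F}_{p_2\to p_1,\,i>0}^{(i)}\|_2^2=0$, completing the proof.
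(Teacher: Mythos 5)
Your proposal follows essentially the same route as the paper's appendix proof: both reduce the claim to showing that the reweighted cross-moments factor (equivalently, that the covariances $\mathrm{cov}(\hat{x}_{ip_2}^k, y_i)$ vanish as $n\to\infty$), both achieve this by taking $w_i$ to be the ratio of the product of estimated marginal densities to the estimated joint density so that weighted expectations split into products of one-dimensional integrals, and both then conclude from the nonsingularity of the resulting homogeneous moment system that all non-constant Taylor coefficients are zero. Your treatment is, if anything, slightly more careful than the paper's on the invertibility of the limiting Hankel/Vandermonde moment matrix, which the paper merely asserts.
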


CRTRE is versatile and can be effectively integrated into various ML models and applications, including integrating Chain of Thought prompt with longformer~\citep{beltagy2020longformer} for the ICD code prediction task. The ICD prediction task involves predicting ICD codes from clinical notes. 

To achieve this, we first preprocess the clinical notes using MedSpaCy~\citep{medspacy}, a specialized NLP toolkit tailored for clinical text. MedSpaCy is employed to extract all Concept Unique Identifier (CUI) codes from the clinical notes. Each extracted CUI code serves as a feature, while the corresponding ICD code is treated as a label. This allows us to construct association rules between CUIs and ICD codes.

Next, we apply our regularization strategy and loss function to filter the association rules, specifically targeting those with strong causal relationships. The regularizer is designed to promote independence among variables in a nonlinear context, ensuring that only the most relevant and causally significant rules are retained. This filtering process is vital for enhancing the model's ability to learn meaningful and generalizable patterns from the data.

After identifying the causal rules, we match the CUIs extracted from the clinical notes to determine which rules are satisfied. These matched rules are then used to create prompts in a cloze-style format. We combine these prompts with ICD code descriptions and the original clinical notes to form a comprehensive input for the KEPTLongformer model, facilitating an effective reranking process. Specifically, we formulate the multi-label classification task as follows:
\vspace{-3mm}
\[
x_p = c_1, r_1 : [\text{MASK}], \, c_2, r_2 : [\text{MASK}], \, \ldots, \, c_n, r_n : [\text{MASK}]
\]

where \(r_i\) represents the satisfied rule set of ICD code candidates generated from previous models and \(c_i\) represents the description of each ICD. The [MASK] tokens serve as placeholders, which the model fills with specific tokens ("yes" or "no") to decide whether a code should be assigned to a clinical note based on the context provided.

To refine the predicted ICD code candidates generated by different models, we employ a cloze-style prompt approach rather than a generative prompt design typically used for few-shot learning. This reranking approach, inspired by code reranking techniques~\citep{yang2022knowledge}, uses the KEPTLongformer model to evaluate each candidate code's likelihood of being correct (positive or negative) for the given clinical note. By filling the [MASK] tokens with the appropriate vocabulary, KEPTLongformer determines if the code is appropriate for assignment.

\section{Experiment}

\subsection{Validation on Synthetic Dataset}\label{exp_datasets}

To examine the proposed constraints' effect on eliminating linear and nonlinear connotation relationships, we follow previous work~\citep{kuang2020stable} to conduct evaluations on synthetically generated datasets. Notice that a different objective function~\ref{regression} is built for regression task, where $W_{i}$ is the sample weight and the variable $\zeta$ is slack variable. In this experiment, we only expand two terms by the Taylor expansion.

\vspace{-5mm}
\begin{equation}
\label{regression}
\begin{array}{r}
\min _{w, b, \zeta, \zeta^*} \frac{1}{2} w^T w+ \sum_{i=1}^n (C+W_{i})\left(\zeta_i+\zeta_i^*\right) \\
\text { subject to } y_i-w^T \phi\left(x_i\right)-b \leq \varepsilon+\zeta_i \\
w^T \phi\left(x_i\right)+b-y_i \leq \varepsilon+\zeta_i^* \\
\zeta_i, \zeta_i^* \geq 0, i=1, \ldots, n\\
\|\mathscr{F}_{p_{2}\rightarrow p_{1}, i>0}^{(i)}\|_{2}^{2} \leq \gamma\\
\|W\|_{2}^{2} \leq \lambda_{1},\left(\sum_{k=1}^{n} W_{k}-1\right)^{2} \leq \lambda_{2}
\end{array}
\end{equation}

To test the stability of the algorithms, we generate a set of environment $e$ with a distinct distribution $P_{XY}$. Following Kuang's experiment~\citep{kuang2020stable}. In addition to the linear settings, we propose to include nonlinear evaluations under a nonlinear environment as shown in Appendix~\ref{settings} and the baselines are shown in Appendix~\ref{baseline_synthesis}

\noindent\textbf{Generating Various Environments} To test the stability of the algorithms, we generate a set of environment $e$ with a distinct distribution distribution $P_{XY}$. Following the Kuang's experiment~\cite{kuang2020stable}, we generate different environments based on various $P(S|V)$. To simplify the problem, we simulate $P(S_{b}|V$ on a subset $S_{b} \in S$, where the dimension of $S_{b}$ is $0.2*p$. We applied the bias rate equation $P r=\prod_{\mathbf{S}_i \in \mathbf{S}_b}|r|^{-5 * D_{i}}$ to tune the $P(S_{b}|V$, where $D_i=\left|f(\mathbf{S})-\operatorname{sign}(r) * \mathbf{V}_i\right|, r \in[-3,-1) \cup(1,3]$. $r > 1$ indicates that $Y$ and $S_{b}$ have positive unstable relationships, while $r < -1$ corresponds to the negative unstable relationships. The higher absolute value of $r$ the stronger connection between $S_{b}$ and $Y$, leading to generate different environments. The result is shown in Figure 3.

\begin{figure*}[t]
	\includegraphics[width=0.8\textwidth]{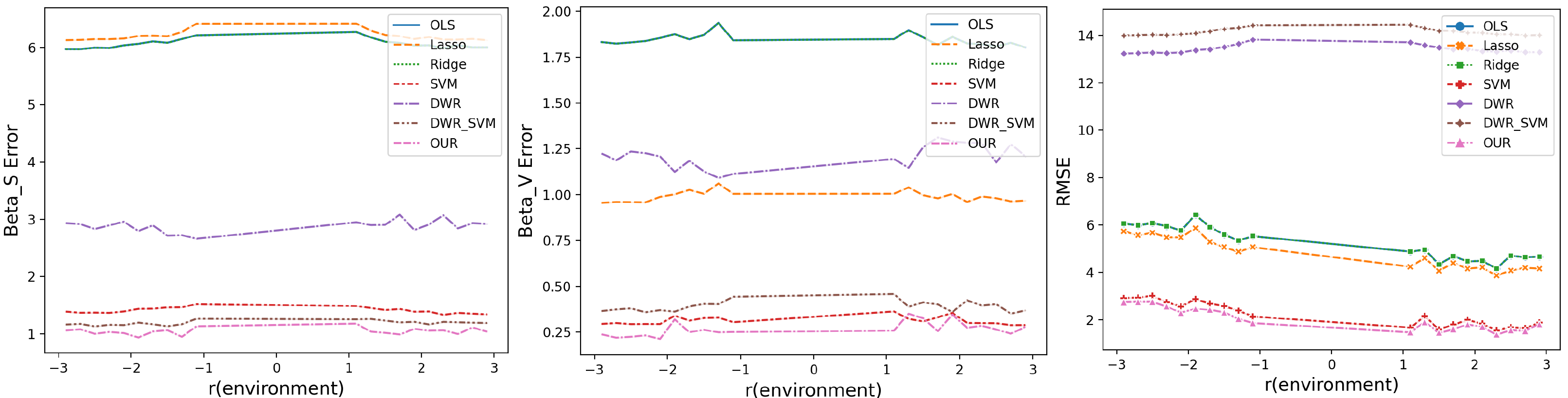}
	\centering
	\caption{Figures describe the $\beta_{S}$, $\beta_{V}$ and RMSE with various environments.}
	\label{shift}
 \label{causal_results_scores}
\end{figure*}

\subsection{Causal Experiment Results and Explanation}

\begin{figure*}[tb]
\begin{minipage}[t]{1\linewidth}
    \centering
    \includegraphics[scale=0.33]{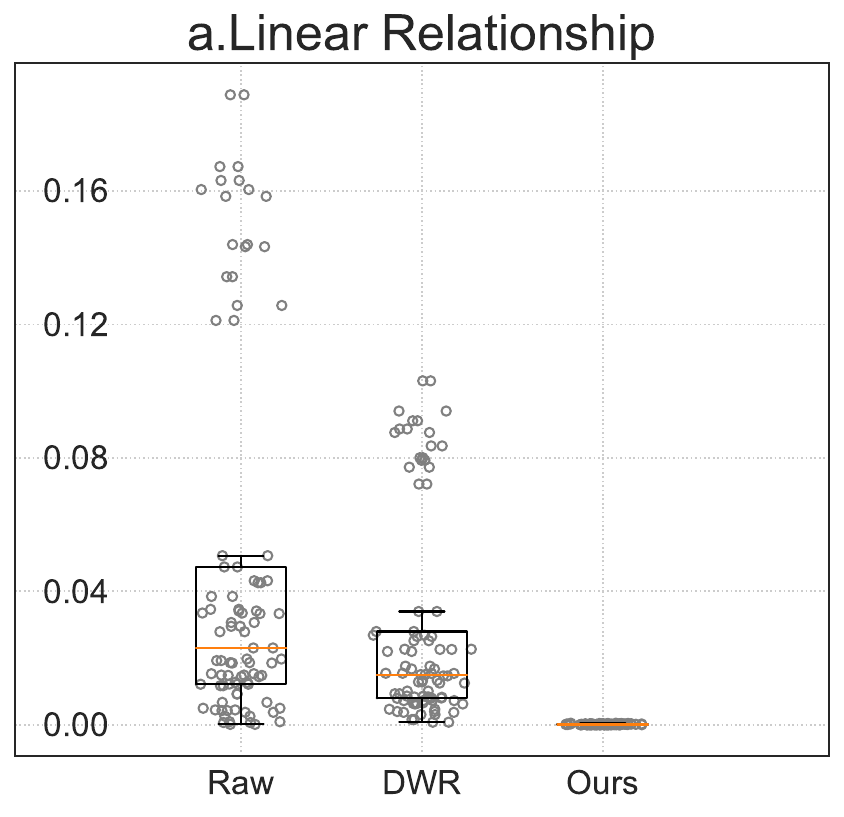}
    \centering
    \includegraphics[scale=0.33]{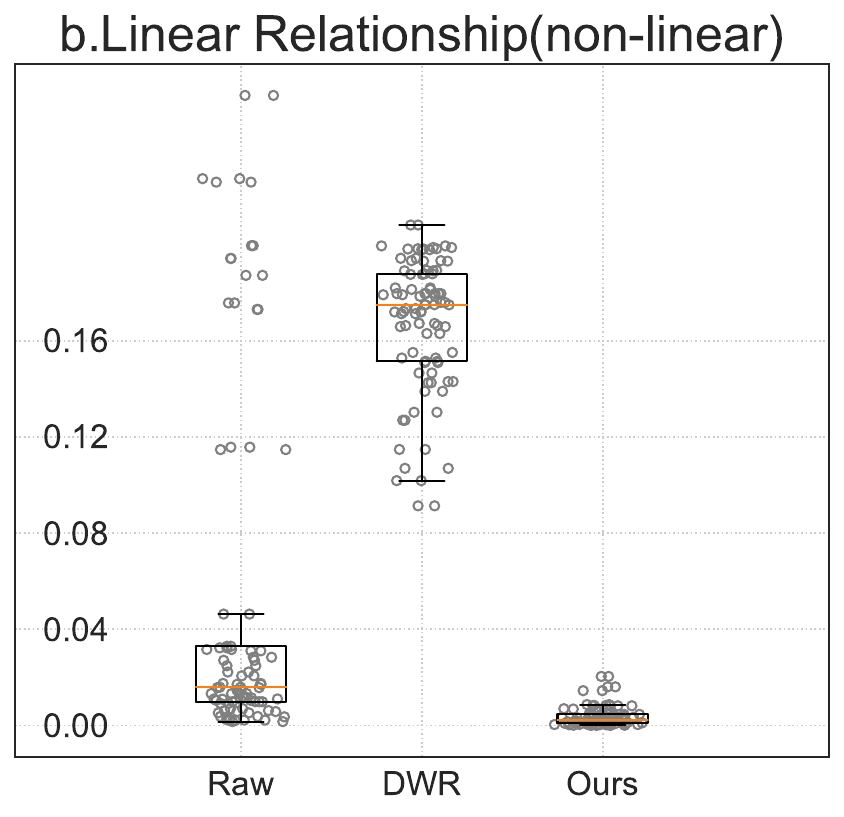}
    \centering
    \includegraphics[scale=0.33]{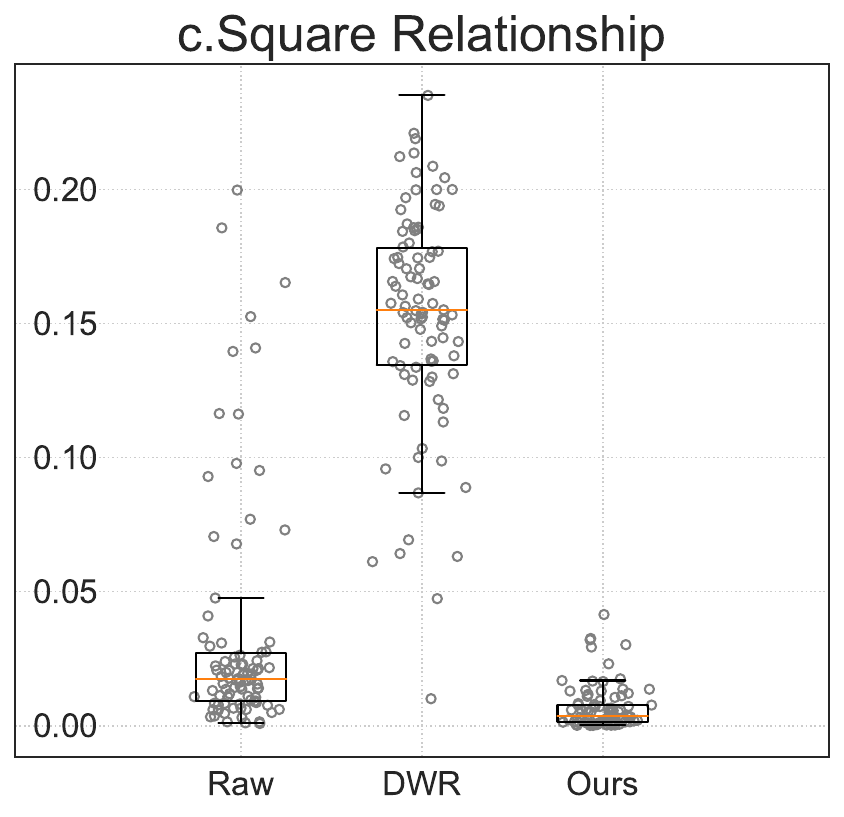}
    \centering
    \includegraphics[scale=0.33]{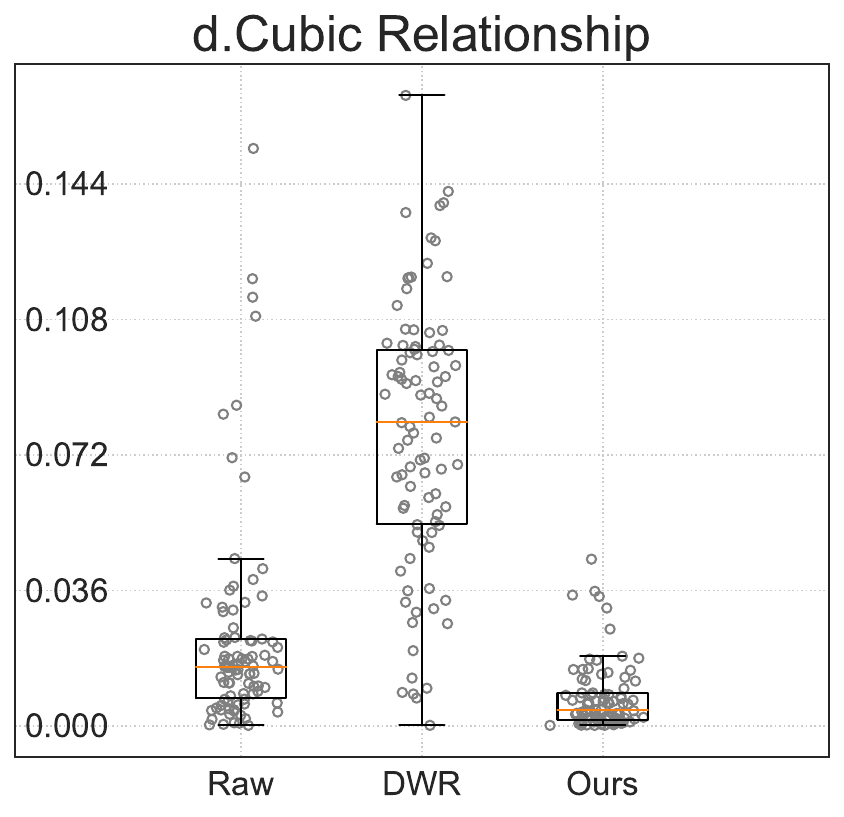}
    \centering
    \includegraphics[scale=0.33]{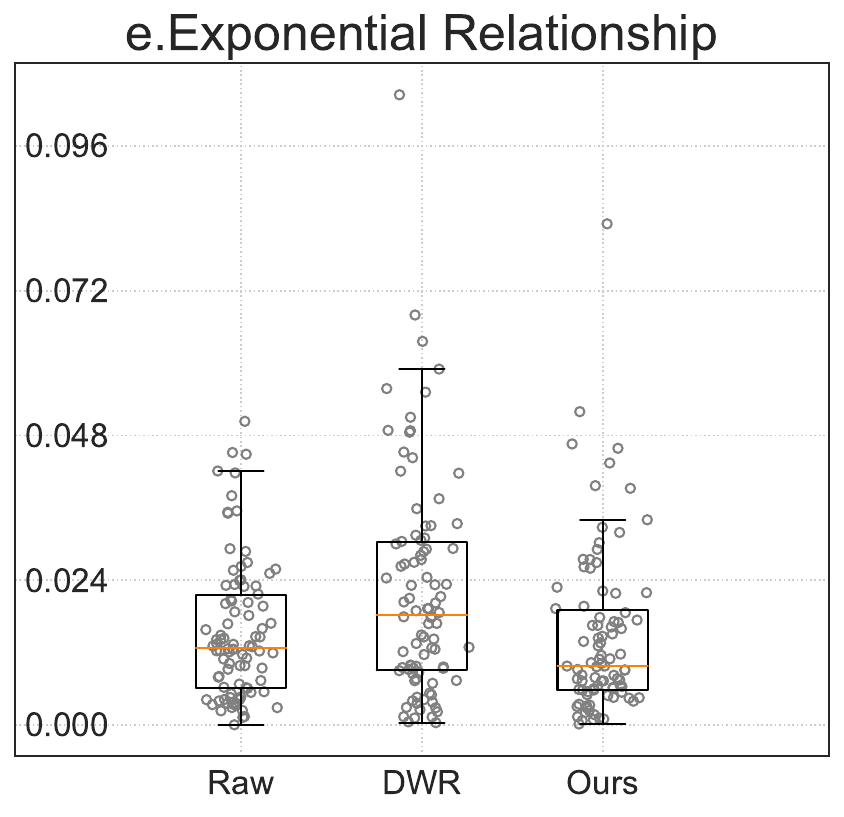}
    \centering
    \includegraphics[scale=0.33]{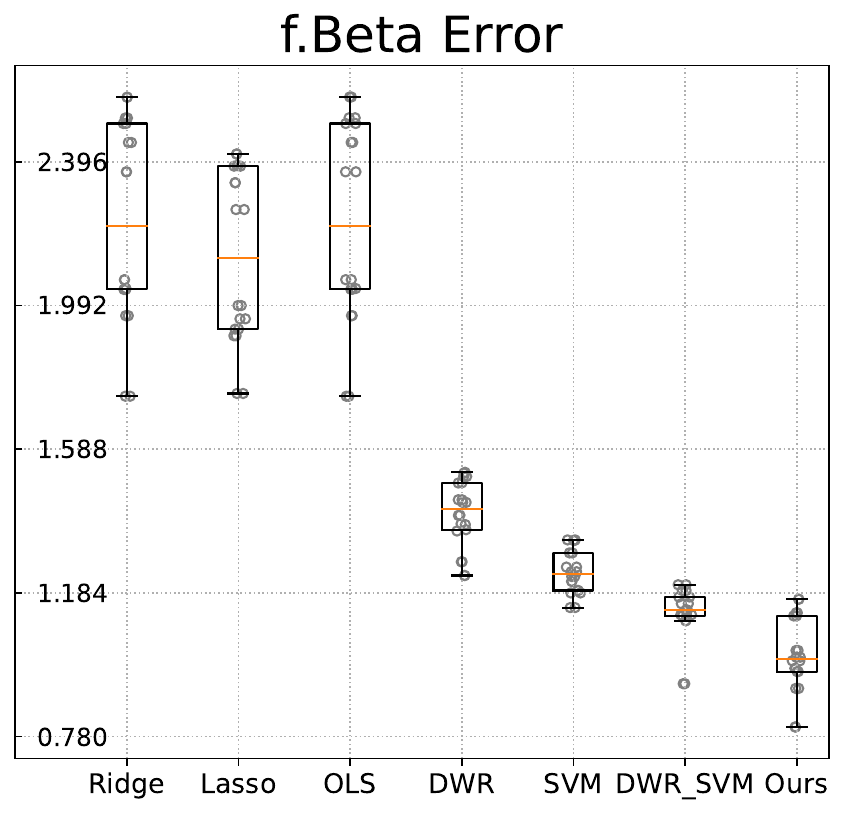}
  \end{minipage}%
  \caption{Figures (a)-(d) describe the distribution of the Pearson Coefficient values among various relationships. Figure (a) reports the $\beta$ errors of different models. Figure (f) is under a linear environment and other figures are under nonlinear environments. Our model is able to provide the greatest reduction of both linear and nonlinear relationships. }
  \vspace{-4mm}
  \label{beta_RMS}
\end{figure*}

To compare our regularizers with DWR, we apply Pearson Correlation to calculate the relationship strength among features. Since Pearson Correlation can only describe linear relationship, we construct nonlinear pairs $\mathbf{WV_{i}}$ with $\mathbf{(WV)^{2}_{j}}$, $\mathbf{(WV)^{3}_{j}}$ and $exp(\mathbf{WV}_{j})$ in addition to $\mathbf{WV_{i}}$ with $\mathbf{WV_{j}}$. The result can be found in Figure~\ref{beta_RMS}. Both DWR and the proposed regularizer can handle pure linear relationships (experimental environment(A)) but improvements are achieved from the proposed regularizer. As we add nonlinear relationships to the linear experimental environment, DWR start to have difficulty with the linear relationship part while the proposed method is still able to reduce a large amount of the relationships. For nonlinear environments, compared with the original unweighted dataset, DWR unexpectedly increases nonlinear relationships where there are no existing nonlinear relationships (square, cubic and exponential). Instead, our model does not incur nonlinear relationships but instead reduce them. To further validate its effectiveness on various sample sizes and the number of features, we calcualte the $\beta_{S}$ and $beta_{V}$ errors: $error(\beta)=\sum_{i}|\beta_{true}-\beta|$ , based on 9 kinds of datasets as shown in the figure~\ref{RMS_varying}. 

To further confirm that the coefficients estimated by our model are based on causality, we repeat experiments 50 times to calculate $\sum{\|\beta -\hat{\beta}\|}$, where $\beta$ and $\hat{\beta}$ represent the true value and estimated parameters, respectively. In Figure~\ref{beta_RMS}, we find that the difference between the estimated parameters and the true values is smaller with our model, compared to other models in the nonlinear environment.
Notice that our model achieves much smaller distribution variance as well as much smaller average values of $\beta$ errors comparing to baselines. Although the regularizer of DWR can solve the stable problem in linear environments, it retains or expands nonlinear confounding in the nonlinear environments. From the above results, we find that our model is able to reduce correlations among all predictors and avoid being affected by nonlinear confounding, resulting a reduced estimation bias in more general environments. 

\begin{table*}[t]
    \centering
    \caption{Results under varying sample size \textit{n} and number of variables within nonlinear environments.}
    \label{RMS_varying}
    \vspace{-1mm}
  \resizebox{0.7\linewidth}{!}{
	\begin{tabular}{l|lcc|lcc|lcc} 
		\toprule
			 Method & \multicolumn{3}{c|}{n=1000, m=5} & \multicolumn{3}{c|}{n=1000, m=10}   & \multicolumn{3}{c}{n=1000, m=15}   \\ 
   \cmidrule(r){2-10}
		& $\beta_{S}$ Error & $\beta_{V}$ Error       & $\beta$ Error & $\beta_{S}$ Error & $\beta_{V}$ Error & $\beta$ Error & $\beta_{S}$ Error       & $\beta_{V}$ Error & $\beta$ Error \\ 
  \midrule
        OLS & 3.357  & 0.430  & 1.894  & 3.605  & 0.729  & 2.167  & 3.823  & 0.866  & 2.345   \\ 
        Lasso & 3.390  & 0.326  & 1.858  & 3.586  & 0.647  & 2.117  & 3.940  & 0.390  & 2.165   \\ 
        Ridge & 3.357  & 0.430  & 1.893  & 3.604  & 0.729  & 2.166  & 3.822  & 0.866  & 2.344   \\ 
        SVM & 2.067  & 0.240  & 1.153  & 2.273  & 0.375  & 1.324  & 2.366  & 0.410  & 1.388   \\ 
        DWR & 2.279  & 0.249  & 1.264  & 2.566  & 0.658  & 1.612  & 3.258  & 1.182  & 2.220   \\ 
        DWR\_SVM & 1.799  & 0.303  & 1.051  & 2.077  & 0.483  & 1.280  & 2.494  & 0.918  & 1.706   \\ 
        OUR & \textbf{1.555}  & \textbf{0.199}  & \textbf{0.877}  & \textbf{1.898}  & \textbf{0.373}  & \textbf{1.135}  & \textbf{2.265}  & \textbf{0.382}  & \textbf{1.323}   \\ 
  \midrule
         & \multicolumn{3}{c|}{n=2000, m=5} & \multicolumn{3}{c|}{n=2000, m=10}   & \multicolumn{3}{c}{n=2000, m=15} \\
        \cmidrule(r){2-10}
		& $\beta_{S}$ Error & $\beta_{V}$ Error       & $\beta$ Error & $\beta_{S}$ Error & $\beta_{V}$ Error & $\beta$ Error & $\beta_{S}$ Error       & $\beta_{V}$ Error & $\beta$ Error  \\ 
    \midrule

        OLS & 3.253  & 0.444  & 1.849  & 3.521  & 0.630  & 2.075  & 4.071  & 0.561  & 2.316   \\ 
        Lasso & 3.278  & 0.250  & 1.764  & 3.490  & 0.473  & 1.982  & 4.260  & 0.168  & 2.214   \\ 
        Ridge & 3.253  & 0.444  & 1.848  & 3.520  & 0.630  & 2.075  & 4.071  & 0.561  & 2.316   \\ 
        DWR & 2.147  & 0.231  & 1.189  & 2.244  & 0.493  & 1.369  & 2.749  & 0.974  & 1.861   \\ 
        SVM & 2.020  & 0.271  & 1.145  & 2.158  & 0.315  & 1.237  & 2.453  & 0.349  & 1.401   \\ 
        DWR\_SVM & 1.675  & 0.305  & 0.990  & 1.861  & 0.407  & 1.134  & 2.317  & 0.572  & 1.445   \\ 
        OUR & \textbf{1.544}  & \textbf{0.214}  & \textbf{0.879}  & \textbf{1.719}  & \textbf{0.292}  & \textbf{1.006}  & \textbf{2.125}  & \textbf{0.323}  & \textbf{1.224}   \\ 

    \midrule
         & \multicolumn{3}{c|}{n=3000, m=5} & \multicolumn{3}{c|}{n=3000, m=10}   & \multicolumn{3}{c}{n=3000, m=15} \\
        \cmidrule(r){2-10}
		& $\beta_{S}$ Error & $\beta_{V}$ Error       & $\beta$ Error & $\beta_{S}$ Error & $\beta_{V}$ Error & $\beta$ Error & $\beta_{S}$ Error       & $\beta_{V}$ Error & $\beta$ Error  \\ 
    \midrule
        OLS & 3.297  & 0.335  & 1.816  & 3.593  & 0.579  & 2.086  & 3.736  & 0.611  & 2.173   \\ 
        Lasso & 3.279  & 0.074  & 1.677  & 3.803  & 0.179  & 1.991  & 3.703  & 0.527  & 2.115   \\ 
        Ridge & 3.297  & 0.335  & 1.816  & 3.593  & 0.579  & 2.086  & 3.735  & 0.611  & 2.173   \\
        DWR & 2.178  & 0.150  & 1.164  & 1.970  & 0.415  & 1.192  & 2.610  & 0.547  & 1.578   \\ 
        SVM & 2.066  & 0.217  & 1.141  & 2.046  & 0.338  & 1.192  & 2.261  & 0.329  & 1.295   \\ 
        DWR\_SVM & 1.764  & 0.284  & 1.024  & 1.833  & 0.312  & 1.072  & 2.082  & 0.484  & 1.283   \\ 
        OUR & \textbf{1.748}  & \textbf{0.065}  & \textbf{0.907}  & \textbf{1.618}  & \textbf{0.171}  & \textbf{0.894}  & \textbf{2.007}  & \textbf{0.325}  & \textbf{1.166}   \\ 
  \bottomrule
	\end{tabular}

 }
\vspace{-2mm}
\end{table*}

\subsection{Validation on Three Real-World Datasets}
To further validate the effectiveness of our model in real-world scenarios, we perform experiments on three different EHR datasets in appendix~\ref{dataset}. All data are carefully prepossessed to ensure no sensitive information is exposed. To further prove that our model can calculate causality rather than probability, we design a counterfactual experiment to test by calculating the accuracy, f1 score, precision and recall. We choose the ten least supported negative-sample-associated rules and select the the rule with the lowest absolute weight which is calculated by SVM classifier under normal training. We construct the spurious correlation between the rule and labels. For example, we set the chosen rule as $r_{i}$, and we pull all the samples in the training set that satisfy $\{r_{i}=1, y=1\}$ in the test set, and put all the samples in the test set that satisfy $\{r_{i}=0, y=0\}$ in the training set. This process will increase the value of the joint probability $P(r_{i}=1, y=1)$ and $P(r_{i}=0, y=0)$, which is contrary to the truth: $r_{i}=1 \Rightarrow y=0$. For each dataset, we tune the parameters so that all models perform as well as possible.

\begin{table*}[!t]
    \centering
    \caption{Prediction performances over various healthcare datasets on the counterfactual experiment.}
    \label{resultaccuracy}
  \resizebox{0.75\linewidth}{!}{
	\begin{tabular}{clcccc|lcccccc} 
		\toprule
			  &\multicolumn{5}{|c|}{Non Rule-based} & \multicolumn{6}{c}{Rule-based} \\ 
   \cmidrule(r){2-13}
      &\multicolumn{1}{|c}{XGBoost} & RF & SVM & LR & MLP & XGBoost & RF & SVM & LR & MLP & DWR & Ours  \\ 
     \cmidrule(r){1-13}
      &\multicolumn{11}{c}{Heart Disease} \\
      \cmidrule(r){1-13}
          \multicolumn{1}{c|}{Accuracy} & 0.903 & 0.887 & 0.885 & 0.869 & 0.947 & 0.869 & 0.868 & \textbf{0.960} & 0.934 & 0.878 & 0.937 & \textbf{0.960}  \\ 
          \multicolumn{1}{c|}{F1} & 0.880 & 0.863 & 0.899 & 0.882 & 0.952 & 0.882 & 0.879 & 0.963 & 0.940 & 0.892 & 0.943  & \textbf{0.964}  \\ 
          \multicolumn{1}{c|}{Precision} & 0.880 & 0.846 & 0.886 & 0.882 & 0.941 & 0.857 & 0.864 & \textbf{0.972} & 0.939 & 0.850 &0.931 & 0.966  \\ 
          \multicolumn{1}{c|}{Recall} & 0.880 & 0.880 & 0.912 & 0.882 & 0.963 & 0.909 & 0.897 & 0.956 & 0.945 & 0.940 & 0.958 & \textbf{0.964}  \\ 
          \multicolumn{1}{c|}{Causality} & - & - & - & - & - & 0.398 & 0.274 & 0.455 & 0.458 & 0.402 & 0.320 & \textbf{0.528}  \\ 
         \cmidrule(r){1-13}
      &\multicolumn{11}{c}{Esophageal Cancer} \\
      \cmidrule(r){1-13}
         \multicolumn{1}{c|}{Accuracy} & 0.788 & 0.750 & 0.827 & 0.808 & 0.750 & 0.738 & 0.727 & \textbf{0.900} & 0.812 & 0.846 & 0.854 & \textbf{0.900}  \\ 
          \multicolumn{1}{c|}{F1} & 0.776 & 0.683 & 0.809 & 0.800 & 0.735 & 0.708 & 0.697 & \textbf{0.888} & 0.783 & 0.824 &0.825& 0.885  \\ 
          \multicolumn{1}{c|}{Precision} & 0.704 & 0.737 & 0.827 & 0.808 & 0.720 & 0.723 & 0.692 & 0.867 & 0.804 & 0.843 & 0.842&\textbf{0.874}  \\ 
          \multicolumn{1}{c|}{Recall} & 0.864 & 0.636 & 0.792 & 0.833 & 0.750 & 0.699 & 0.713 & \textbf{0.913} & 0.771 & 0.812 &0.811 & 0.900  \\ 
          \multicolumn{1}{c|}{Causality} & - & - & - & - & - & 0.130 & 0.236 & 0.281 & 0.327 & 0.160 & 0.314 & \textbf{0.339}  \\ 
         \cmidrule(r){1-13}
      &\multicolumn{11}{c}{Cauda Equina Syndrome} \\
      \cmidrule(r){1-13}
         \multicolumn{1}{c|}{Accuracy} & 0.788 & 0.75 & 0.827 & 0.808 & 0.750 & 0.883 & 0.779 & 0.887 & 0.886 & 0.891 & 0.891 &\textbf{0.893}  \\
          \multicolumn{1}{c|}{F1} & 0.776 & 0.683 & 0.809 & 0.800 & 0.735 & 0.880 & 0.780 & 0.883 & 0.882 & \textbf{0.888} & 0.887 &\textbf{0.888}  \\ 
          \multicolumn{1}{c|}{Precision} & 0.704 & 0.737 & 0.827 & 0.808 & 0.720 & 0.818 & 0.706 & 0.825 & 0.822 & 0.831 & 0.831 & \textbf{0.834}  \\ 
          \multicolumn{1}{c|}{Recall} & 0.864 & 0.636 & 0.792 & 0.833 & 0.750 & 0.951 & 0.874 & 0.950 & 0.952 & \textbf{0.953} &\textbf{0.953} & 0.951  \\ 
         \multicolumn{1}{c|}{Causality} & - & - & - & - & - &0.231 & 0.298 & 0.279 & 0.132 & 0.262 & 0.308 & \textbf{0.477}\\

  \bottomrule
	\end{tabular}

}
\vspace{-3mm}
\end{table*}

\begin{table}[h]
\centering
\caption{Results on MIMIC-III-ICD9-50 datasets and MIMIC-IV-ICD9-50 datasets.}
\label{mimic}
\scalebox{0.80}{
\begin{tabular}{lcccccccccc}
\toprule
\multirow{2}{*}{Model} & \multicolumn{4}{c}{MIMIC-III-ICD9-50} & & \multicolumn{4}{c}{MIMIC-IV-ICD9-50} \\
 & \multicolumn{2}{c}{AUC} & \multicolumn{2}{c}{F1} & & \multicolumn{2}{c}{AUC} & \multicolumn{2}{c}{F1} \\
 \cmidrule(lr){2-5} \cmidrule(lr){7-10}
 & Macro & Micro & Macro & Micro & & Macro & Micro & Macro & Micro \\
\midrule
Joint LAAT & 92.36 & 94.24 & 66.95 & 70.84 & & 94.92 & 96.31 & 69.93 & 74.33 \\
MSMN & 	92.50 & 94.39 & 67.64 & 71.78 & & 95.13 & 96.46 & 71.85 & 75.78 \\
KEPT & 	92.63 & 94.76 & 68.91 & 72.85 & & 95.20 & 96.65 & 71.7 & 76.02 \\
CRTRE & \textbf{92.8} & \textbf{99.53} & \textbf{69.35} & \textbf{73.17} & & \textbf{95.39} & \textbf{96.70} & \textbf{72.21} & \textbf{76.34} \\
\bottomrule
\end{tabular}
}
\vspace{-5mm}
\end{table}

Based on diagnostic and procedure codes, patients with CES who underwent surgery between 2000 and 2015 were selected. Patient demographics (age, gender, race, comorbidities, and insurance status) and hospital characteristics (measured by hospital bed number quartiles).

\noindent\textbf{Pre-Processing:} We convert the continuous variables into categorical variables before feeding them to the model. To handle missing data in the datasets, we adopted MICE (Multiple imputations by chained equations) by transforming imputation problems into estimation problems where each variable will be regressed on the other variables. This method provides promising flexibility since every variable can be assigned a suitable distribution~\citep{wulff2017multiple}. Then we apply the SMOTE algorithm~\citep{fernandez2018smote} to address the class imbalance issue in our datasets.

\noindent\textbf{Feature Selection:} Redundant information in EHR datasets may cause noise and irrelevant information during feature extraction. A feature selection method~\citep{chen2007enhanced} is adopted. To improve the robustness of the model, we divide the dataset randomly into five groups for cross-validation. Each time we extract one group as the test set to analyze and measure the average performance in the feature selection process. Due to the high complexity of our model, we apply and compare the four baseline models: XGboost, SVM, Logistic Regression, and Random Forest to extract important features in feature selection and input the set of the features with the highest average AUROC scores into our model. 
In the end, we extract 13, 47, and 45 features for \emph{Heart Disease}, \emph{Esophageal Cancer}, and \emph{Cauda Equina Syndrome}, respectively.



\subsubsection{Results}
To measure the performance of models, we compute accuracy, precision, recall and F1 scores. The results are shown in Figure \ref{causal_results_scores}, and Table~\ref{resultaccuracy}. In addition to measuring the performance of the traditional models, we consider  the filtered rules as zero-one matrix $X$ into the baselines rather than the original datasets. In another counterfactual experiment as shown in Table~\ref{resultaccuracy}, we increase the spurious probability and construct the relationship which is contrary to the truth. We found that most of the baselines calculate parameters based on probability which results in worse performance, while our model is much higher than the other models. In addition, we extracted the weights of different rules learned from various models. We also asked doctors to evaluate these rules, as detailed in Appendix~\ref{humanevaluation}. We then calculated the alignment between the model-learned weights and the doctors' scores. To measure the similarity between them, we used Spearman Coefficients. Ultimately, we found that our model demonstrated a higher degree of similarity compared to other models, indicating that our model is better at learning the true causality across different environments.


\subsection{Validation on the ICD Coding Task}

In real-world experimental settings, we evaluated our model on both the MIMIC-III and MIMIC-IV datasets, achieving state-of-the-art results as shown in Table~\ref{mimic} in comparison with three other NLP methods for the ICD coding task (i.e., automatically assigning ICD codes based on the corresponding EHR note):
\begin{itemize}
    \item \textbf{Joint LAAT} \quad Joint LAAT~\citep{vu2020label} introduces a hierarchical joint learning approach, designed to predict both ICD codes and their parent codes. By leveraging the ICD code hierarchy, the model improves accuracy in medical code prediction.

    \item \textbf{MSMN} \quad MSMN utilizes synonyms with an adapted multi-head attention mechanism to achieve SOTA results on the MIMIC-III-50 task. This model captures richer semantic relationships, improving performance in medical code prediction.

    \item \textbf{KEPT} \quad KEPT tackles the ICD coding challenge with a prompt-based fine-tuning approach, addressing the long-tail distribution problem. By injecting domain-specific knowledge (hierarchy, synonym, and abbreviation), KEPT significantly improves performance on MIMIC-III datasets.
\end{itemize}

For the MIMIC-III-ICD9-50 dataset, our model achieved an AUC Macro of 92.8, outperforming Joint LAAT (92.36), MSMN (92.50), and KEPT (92.63). In terms of AUC Micro, our model achieved 99.53, compared to 94.24 for Joint LAAT, 94.39 for MSMN, and 94.76 for KEPT. Similarly, for F1 Macro, our model's achieved the best 69.35, outperforming Joint LAAT, MSMN, and KEPT ( 66.95, 67.64, and 68.91, respectively). On the MIMIC-IV-ICD9-50 dataset, our model demonstrated superior performance, achieving an AUC Macro of 95.39, compared to 94.92 for Joint LAAT, 95.13 for MSMN, and 94.97 for KEPT. Our model also achieved an AUC Micro of 96.70, surpassing Joint LAAT (96.31), MSMN (96.46), and KEPT (96.41). For F1 Macro, our model scored 72.21, outperforming Joint LAAT (69.93), MSMN (71.85), and KEPT (71.35). These results confirm the superior performance and generalizability of our model across different datasets and tasks, demonstrating the importance of causal rules.

\section{Conclusion}
In this paper, we present an interpretable causal inference approach focusing on nonlinear environments for healthcare applications. The proposed method extracts underlying association rules from the raw features as representations. A novel regularizer is constructed to handle both linear and nonlinear confoundings in real-world applications. The superior performances on four datasets (one synthetic and three real-world EHR datasets) from different domains compared to baseline methods validate both the effectiveness and generalizability of the proposed method. Consistent ratings between healthcare professionals and our method on the extracted rules on real-world datasets further validate the model's interpretability. Future works include exploring the applicability of adopting current framework into other Machine Learning applications. For example, we would like to extend the proposed causal rules extraction method as a general approach to provide nature language processing models the ability to analyze the causal relationship and enhance the model interpretability.

\bibliographystyle{IEEEtran}
\bibliography{reference.bib}

\newpage
\onecolumn
\appendix
\subsection{Algorithm}\label{app_algorithm}

We combine algorithm~\ref{rulereItemduce} with object function~\ref{selection} to select the robust rules and prune the redundant items. In the \emph{RulesSelection} function, we delete one rule each time with lowest $\|w\|^{2}_{2}$ and save the rule sets with the highest accuracy. In the \emph{ItemReduce} function, we apply cross-validation to train SVM model and save the item sets with best accuracy.
\renewcommand{\algorithmicrequire}{\textbf{Input:}}  
\renewcommand{\algorithmicensure}{\textbf{Output:}} 
\begin{algorithm}
        \caption{Rules Selection and Item Reduction}
        \label{rulereItemduce}
        \begin{algorithmic}[1] 
            
            \Require $Rules\{X_{i}\}$ are the association rules obtained by Apriori algorithm with training datasets. $data$ is EHR datasets.
            \Ensure $Bestrules$
            \Function{RulesSelection}{$Rules, data$}
                \State $Bestrules \gets Rules$
                \State $Objfunction$ is objective function
                \State $Select \gets Bestrules$
                \State $Bestaccuracy \gets Select$
                \State $Lastrules \gets \varnothing$
                \While{$Select \neq Lastrules$}
                    \State $Lastrules \gets Select$
                    \State $w \gets \operatorname*{argmin} Objectfunction(Select, data)$
                    \State $Selected \gets \operatorname*{argmin} w_{i}^{2}$
                    \State $Temprules \gets \{Bestrules\}/\{Selected\}$
                    \State $Tempaccuracy \gets Temprules$
                    \If{$Tempaccuracy > Bestaccuracy$}
                        \State $Bestaccuracy \gets Tempaccuracy$
                        \State $Select \gets Temprules$
                    \EndIf
                \EndWhile
                \State \Return{$Bestrules$}
            \EndFunction
            \Function{ItemReduce}{$Bestrules, data$}
                \State $Bestauc \gets SVM(Bestrules, data)$
                \State $Lastrules \gets \varnothing$
                \While{$Bestrules \neq Lastrules$}
                    \State $Item \gets \operatorname*{argmax} SVM(\{Bestrules\}/\{Item\})$
                    \State $Accuracy \gets SVM(\{Bestrules\}/\{Item\})$
                    \If{$Accuracy \geq Bestauc$}
                        \State $Bestauc \gets Accuracy$
                        \State $Bestrules \gets \{Bestrules\}/\{Item\}$
                    \EndIf
                \EndWhile
                \State \Return{$Bestrules$}
            \EndFunction
        \end{algorithmic}
\end{algorithm}

More details such as proof, algorithm or code could be see our anonymous github\footnote{More details are released at \url{https://anonymous.4open.science/r/Causal-Inference-via-Nonlinear-Variable-Decorrelation-for-Healthcare-Applications-20BF/supplementary.pdf}}

\subsection{Proof}
\section{Proof}
\label{proof}

\begin{lemma}
If the number of features in the datasets and the terms in the Taylor expansion are fixed, 
 when $n \to \infty$ there exists $W \succeq 0$ such that
\begin{equation*}
     \lim_{n \to \infty}\|\mathscr{F}_{p_{2}\rightarrow p_{1}, i>0}^{(i)}\|_{2}^{2}
\end{equation*}
\end{lemma}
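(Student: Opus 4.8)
The plan is to read the vector $\mathscr{F}_{p_2\to p_1} = (\mathscr{X}_{p_2}^{\mathrm{T}}\mathscr{X}_{p_2})^{-1}\mathscr{X}_{p_2}^{\mathrm{T}}\mathscr{Y}_{p_1}$ from Equation~\ref{solution} as the coefficient vector of a weighted degree-$k$ polynomial regression of $x_{p_1}$ on $x_{p_2}$, and to show that its non-constant components (the entries indexed $i>0$, i.e.\ the fitted derivatives $f'_{p_1 p_2},\dots,f^{(p)}_{p_1 p_2}$) can be driven to zero in the large-sample limit by a nonnegative reweighting. The key observation is that these components vanish exactly when, under the reweighted empirical measure, the best polynomial predictor of $x_{p_1}$ from $x_{p_2}$ is constant; equivalently, when $x_{p_1}$ and $x_{p_2}$ are polynomially decorrelated. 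So the existence claim reduces to exhibiting admissible weights that realize this decorrelation asymptotically, which is precisely the independence condition invoked in the discussion preceding Equation~\ref{polynomial}.

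First I would fix the truncation degree $k$ and the number of features (as the hypothesis permits) and pass to the population level. Assuming the samples $(x_{ip_1},x_{ip_2})$ are drawn i.i.d.\ from a joint law with finite moments up to order $2k$ and with the weights taken as a fixed measurable function $w(\cdot)$ of the data, the strong law of large numbers gives that, after normalizing by the appropriate power of $n$, every entry of $\mathscr{X}_{p_2}$ and of $\mathscr{Y}_{p_1}$ converges almost surely to the corresponding weighted population moment. Provided the weighted moment matrix is nonsingular in the limit (which holds whenever $x_{p_2}$ has at least $k+1$ points of support under the reweighted law), continuity of matrix inversion and multiplication yields that $\mathscr{F}_{p_2\to p_1}$ converges to a population regression coefficient vector $\mathscr{F}^{\star}$.

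Next I would construct explicit admissible weights that make $\mathscr{F}^{\star}$ constant. Taking the importance weights $w^{\star}(x_{p_1},x_{p_2}) \propto p(x_{p_1})\,p(x_{p_2})/p(x_{p_1},x_{p_2})$ --- which are nonnegative, so the induced weight vector satisfies $W \succeq 0$, and which are well defined under an overlap (positivity) assumption on the joint support --- reweights the joint law to the product of its marginals, under which $x_{p_1}\perp x_{p_2}$. Independence forces every cross-moment to factor, $\E[x_{p_1}\,x_{p_2}^{\ell}] = \E[x_{p_1}]\,\E[x_{p_2}^{\ell}]$, so the population polynomial regression collapses to the constant $\E[x_{p_1}]$ and all higher-order entries of $\mathscr{F}^{\star}$ are exactly zero. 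Using the plug-in weights $w_i = w^{\star}(x_i)/\sum_j w^{\star}(x_j)$ as the sequence $W^{(n)}$ and combining with the convergence from the previous step gives $\lim_{n\to\infty}\|\mathscr{F}_{p_2\to p_1,\,i>0}^{(i)}\|_2^2 = 0$, as claimed.

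The hard part will be the bookkeeping around the nonstandard weighting in Equation~\ref{polynomial}, where the weights multiply the data values themselves, so that the moment-matrix entries are powers of $w_i x_{ip_2}$ rather than $w_i$ times powers of $x_{ip_2}$. Because a normalized weight scales like $1/n$, the entries of $\mathscr{X}_{p_2}$ live on very different scales across rows and columns, and a naive limit is degenerate; the argument must therefore introduce a diagonal rescaling of $\mathscr{X}_{p_2}$ before invoking the law of large numbers and verify that the rescaled moment matrix stays uniformly well-conditioned, so that inversion remains continuous and the vanishing higher-order coefficients survive the limit. Handling this conditioning, together with discharging the positivity/overlap assumption needed for $w^{\star}$ to exist, is the main technical obstacle; the independence construction and the LLN are otherwise routine.
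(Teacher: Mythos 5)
Your proposal follows essentially the same route as the paper's proof: both reduce the claim to showing that a nonnegative reweighting can make all weighted cross-moments between $x_{p_1}$ and powers of $x_{p_2}$ factor (i.e., the covariances vanish), both then invoke nonsingularity of the weighted moment matrix to force the higher-order coefficients of $\mathscr{F}_{p_2\to p_1}$ to the trivial solution, and both construct the weights as the density ratio of the product of marginals to the joint (the paper implements this with kernel density estimators, $w_i=\prod_q g(x_{iq})/\hat{G}(\cdot)$, which is exactly your plug-in $w^{\star}$). The conditioning issue you flag around the $w_i^k x_{ip_2}^k$ entries is real and is glossed over in the paper as well, so your account is, if anything, slightly more careful on that point.
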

\begin{proof}
Based on our regularizer, we know that
\begin{equation*}
\left(\begin{array}{cccc}
n & \sum_i w_i x_{i p_2} & \cdots & \sum_i w_i^k x_{i p_2}^k \\
\sum_i w_i x_{i p_2} & \sum_i w_i^2 x_{i p_2}^2 & \cdots & \sum_i w_i^{k+1} x_{i p_2}^{k+1} \\
\vdots & \vdots & & \sum_i \\
\sum_i w_i^k x_{i p_2}^k & \sum_i w_i^{k+1} x_{i p_2}^{k+1} & \ddots & \sum_i w_i^{2 k} x_{i p_2}^{2 k}
\end{array}\right)\left(\begin{array}{c}
f_{p_1 p_2}\left(x_{p_2}(0)\right) \\
f_{p_1 p_2}\left(x_{p_2}(0)\right) \\
\vdots \\
f_{p_1 p_2}^{(p)}\left(x_{p_2}(0)\right)
\end{array}\right)=\left(\begin{array}{c}
\sum_i y_i \\
\sum_i w_i x_{i p_2} y_i \\
\vdots \\
\sum_i w_i^k x_{i p_2}^k y_i
\end{array}\right)
\end{equation*}

We assume that the covariance is 0:
\begin{equation*}
\operatorname{cov}\left(\hat{x}_{i p_2}, y_i\right)=\operatorname{cov}\left(\hat{x}_{i p_2}^2, y_i\right)=\operatorname{cov}\left(\hat{x}_{i p_2}^3, y_i\right)=\cdots=\operatorname{cov}\left(\hat{x}_{i p_2}^k, y_i\right)=0
\end{equation*}
Combine with the following equation, we can get
\begin{equation*}
\begin{gathered}
n \rightarrow \infty: \frac{1}{n} \sum_n \hat{x}_{i p_2} y_i-\frac{1}{n^2} \sum_n \hat{x}_{i p_2} \sum_n y_i=\frac{1}{n} \sum_n \hat{x}_{i p_2}^2 y_i-\frac{1}{n^2} \sum_n \hat{x}_{i p_2}^2 \sum_n y_i \\
=\frac{1}{n} \sum_n \hat{x}_{i p_2}^k y_i-\frac{1}{n^2} \sum_n \hat{x}_{i p_2}^k \sum_n y_i=0
\end{gathered}
\end{equation*}

\begin{equation*}
\left(\begin{array}{cccc}
n & \sum_i \hat{x}_{i p_2} & \cdots & \sum_i \hat{x}_{i p_2}^k \\
\sum_i \hat{x}_{i p_2} & \sum_i \hat{x}_{i p_2}^2 & \cdots & \sum_i \hat{x}_{i p_2}^{k+1} \\
\vdots & \vdots & & \vdots \\
\sum_i \hat{x}_{i p_2}^k & \sum_i \hat{x}_{i p_2}^{k+1} & \ddots & \sum_i \hat{x}_{i p_2}^{2 k}
\end{array}\right)\left(\begin{array}{c}
f_{p_1 p_2}\left(x_{p_2}(0)\right) \\
f_{p_1 p_2}^{\prime}\left(x_{p_2}(0)\right) \\
\vdots \\
f^{(p)}{ }_{p_1 p_2}\left(x_{p_2}(0)\right)
\end{array}\right)=\frac{1}{n}\left(\begin{array}{c}
\sum_i y_i \hat{x}_{i p_2} \sum_i y_i \\
\vdots \\
\sum_i \hat{x}_{i p_2}^k \sum_i y_i
\end{array}\right)
\end{equation*}

\begin{equation*}
\left(\begin{array}{cccc}
\frac{\sum_i \hat{x}_{i p_2}^2}{\sum_i \hat{x}_{i p_2}}-\sum_i \hat{x}_{i p_2} & \frac{\sum_i \hat{x}_{p_2}^3}{\sum_i \hat{x}_{i p_2}}-\sum_i \hat{x}_{i p_2}^2 & \cdots & \frac{\sum_i \hat{x}_{i p_2}^{k+1}}{\sum_i \hat{x}_{i p_2}}-\sum_i \hat{x}_{i p_2}^k \\
\sum_i \hat{x}_{i p_2}^3 \\
\frac{\sum_i \hat{x}_{i p_2}^2}{2}-\sum_i \hat{x}_{i p_2} & \frac{\sum_i \hat{x}_{i p_2}^4}{\sum_i \hat{x}_{i p_2}^2}-\sum_i \hat{x}_{i p_2}^2 & \cdots & \frac{\sum_i \hat{x}_{i p_2}^{k+2}}{\sum_i \hat{x}_{i p_2}^2}-\sum_i \hat{x}_{i p_2}^k \\
\vdots & \vdots & \ddots & \vdots \\
\frac{\sum_i \hat{x}_{i p_2}^{k+1}}{\sum_i \hat{x}_{i p_2}^k}-\sum_i \hat{x}_{i p_2}^k & \frac{\sum_i \hat{x}_{i p_2}^{k+2}}{\sum_i \hat{x}_{i p_2}^k}-\sum_i \hat{x}_{i p_2}^k & \cdots & \frac{\sum_i \hat{x}_{i p_2}^{2 k}}{\sum_i \hat{x}_{i p_2}^k}-\sum_i \hat{x}_{i p_2}^k
\end{array}\right)\left(\begin{array}{c}
f_{p_1 p_2}\left(x_{p_2}(0)\right) \\
f^{\prime \prime} p_1 p_2\left(x_{p_2}(0)\right) \\
\vdots \\
f_{p_1 p_2}^{(p)}\left(x_{p_2}(0)\right.
\end{array}\right)=0
\end{equation*}

\begin{equation*}
\left|\begin{array}{cccc}
\frac{\sum_i \hat{x}_{i p_2}^2}{\sum_i \hat{x}_{i p_2}}-\sum_i \hat{x}_{i p_2} & \frac{\sum_i \hat{x}_{i p_2}^3}{\sum_i \hat{x}_{i p_2}}-\sum_i \hat{x}_{i p_2}^2 & \cdots & \frac{\sum_i \hat{x}_{i p_2}^{k+1}}{\sum_i \hat{x}_{i p_2}}-\sum_i \hat{x}_{i p_2}^k \\
\frac{\sum_i \hat{x}_{i p_2}^3}{\sum_i \hat{x}_{i p_2}^2}-\sum_i \hat{x}_{i p_2} & \frac{\sum_i \hat{x}_{i p_2}^4}{\sum_i \hat{x}_{i p_2}^2}-\sum_i \hat{x}_{i p_2}^2 & \cdots & \frac{\sum_i \hat{x}_{i p_2}^{k+2}}{\sum_i \hat{x}_{i p_2}^2}-\sum_i \hat{x}_{i p_2}^k \\
\vdots & \vdots & & \vdots \\
\frac{\sum_i \hat{x}_{i p_2}^{k+1}}{\sum_i \hat{x}_{i p_2}^k}-\sum_i \hat{x}_{i p_2}^k & \frac{\sum_i \hat{x}_{i p_2}^{k+2}}{\sum_i \hat{x}_{i p_2}^k}-\sum_i \hat{x}_{i p_2}^k & \cdots & \frac{\sum_i \hat{x}_{i p_2}^{2 k}}{\sum_i \hat{x}_{i p_2}^k}-\sum_i \hat{x}_{i p_2}^k
\end{array}\right| \neq 0
\end{equation*}

$\hat{x}_{i p_2}^2$ is influenced by the $w_{i}$ which can be adjusted, and the determinant of matrix is not equal to 0, hence the equation has only the trivial solution. We can get
\begin{equation*}
f_{p_1 p_2}^{\prime}\left(x_{p_2}(0)\right)=f_{p_1 p_2}^{\prime \prime}\left(x_{p_2}(0)\right)=\cdots=f_{p_1 p_2}^{(p)}\left(x_{p_2}(0)\right)=0
\end{equation*}

If we can prove under our regularizer, we can prove our method can work:

\begin{equation*}
n \rightarrow \infty:\left(\hat{x}_{i p_2}, y_i\right)=\operatorname{cov}\left(\hat{x}_{i p_2}^2, y_i\right)=\operatorname{cov}\left(\hat{x}_{i p_2}^3, y_i\right)=\cdots=\operatorname{cov}\left(\hat{x}_{i p_2}^k, y_i\right)=0
\end{equation*}

We set $\left(\hat{x}_{i p_2}, \hat{x}_{i p_2}^2, \ldots, \hat{x}_{i p_2}^k\right)$ is kernel density estimators: $g(x_{ip_{2}}$. We set the weight $w_{i}$ is:

\begin{equation*}
w_i=\frac{\prod_{i q} g\left(x_{i j}^q\right)}{\hat{G}\left(g\left(x_{i 1}\right), g\left(x_{i 2}\right), \ldots, g\left(x_{i p}\right)\right)}
\end{equation*}

\begin{equation*}
\begin{aligned}
n \rightarrow \infty: E\left[\hat{x}_{p_1}^q\right] &=\frac{1}{n} \sum_i x_{i p_1}^q \frac{\prod_{i q} g\left(x_{i j}^q\right)}{\hat{G}\left(g\left(x_{i 1}\right), g\left(x_{i 2}\right), \ldots, g\left(x_{i p}\right)\right)} \\
&=\int \ldots \int x_{i j}^q \prod_l g\left(x_{i l}^q\right) d x_{i 1} d x_{i 1}^1 \ldots d x_{i p}^q+o(1)=\int x_{i l}^{q_1} g\left(x_{i l}^{q_1}\right) d x_{i l}^{q_1}+o(1)
\end{aligned}
\end{equation*}

\begin{equation*}
\begin{aligned}
n \rightarrow \infty: E\left[\hat{x}_{p_1}^q, \hat{x}_{p_2}\right] &=\frac{1}{n} \sum_i x_{i p_1}^q x_{i p_1}\left(\frac{\prod_{i q} g\left(x_{i j}^q\right)}{\hat{G}\left(g\left(x_{i 1}\right), g\left(x_{i 2}\right), \ldots, g\left(x_{i p}\right)\right)}\right)^2 \\
=& \iint x_{i l}^{q_1} x_{i m} g\left(x_{i l}^{q_1}\right) g\left(x_{i m}\right) d x_{i l}^{q_1} d x_{i m}+o(1) \\
&=\int x_{i l}^{q_1} g\left(x_{i l}^{q_1}\right) d x_{i l}^{q_1} \int x_{i m} g\left(x_{i m}\right) d x_{i m}+o(1)
\end{aligned}
\end{equation*}

\begin{equation*}
n \rightarrow \infty: \operatorname{cov}\left(\hat{x}_{i p_1}^q, \hat{x}_{p_1}\right)=E\left[\hat{x}_{p_1}^q\right] E\left[\hat{x}_{p_1}\right]-E\left[\hat{x}_{p_1}^q, \hat{x}_{p_2}\right]=0
\end{equation*}

We can get:
\begin{equation*}
f_{p_1 p_2}^{\prime}\left(x_{p_2}(0)\right)=f_{p_1 p_2}^{\prime \prime}\left(x_{p_2}(0)\right)=\cdots=f_{p_1 p_2}^{(p)}\left(x_{p_2}(0)\right)=0
\end{equation*}

\end{proof}
\subsection{Baseline}
\label{baseline_synthesis}
We compare our model with five traditional methods for real-world prediction task:

\begin{itemize}
    \item \textbf{Logistic Regression} \quad We leverage the logistic regression classifier with L-BFGS solver for classification.
    \item \textbf{Random Forest} \quad We apply standard Random Forest classifier to solve the classification problem~\citep{pal2005random}.
    \item \textbf{XGboost} \quad We adopt XGBoost, an extreme gradient boosting methods, to compare with other models~\citep{chen2016xgboost}.
    \item \textbf{SVM} \quad We apply supervised learning models, SVM, with linear kernel to analyze data for classification~\citep{suykens1999least}.
    \item \textbf{MLP} \quad We use the traditional neural network multi-layer perceptron to solve this classification task~\citep{agatonovic2000basic}.
\end{itemize}

We conduct a series of ablative studies to evaluate the stability of our model. Table~\ref{hyperparameter} summarizes the results of the experiment with various values of $C$ and Lagrange penalty operators $\gamma$, and $\lambda$. For each sell, we fix the Lagrange penalty operators and increase $C$ to calculate the $\beta$ errors and RMSE errors. The higher $C$ indicates higher integrated mutual information is fed into the model and magnifies the impact of confounding. Higher $\gamma$ values will reduce more confounding effects and diminish mutual information. Here we choose the best parameters ($\gamma=600, \lambda=0.0005, C=0.5$) based on the smallest RMSE  also with a smaller $\beta$ error comparing to ($\gamma=1000, \lambda=0.0005, C=0.5$) which has the same RMSE.

\subsection{Datasets}
\label{dataset}
\noindent\textbf{Esophageal Cancer} consists of data from 261 patients who underwent esophagectomy for esophageal cancer between 2009 and 2018. The collected characteristics include patient demographics, medical and surgical history, clinical tumor staging, adjuvant chemoradiotherapy, esophagectomy procedure type, postoperative pathologic tumor staging, adjuvant chemoradiotherapy, postoperative complications, cancer recurrence, and mortality. 

\noindent\textbf{Cauda Equina Syndrome (CES)} is extracted from the Statewide Planning and Research Cooperative System (SPARCS)~\citep{new1984statewide}, a comprehensive database of all payers for all hospitalizations in New York State~\citep{joo202296}. 

\subsection{Experiment Settings}

For synthesis experiment, we compare our model with five baseline methods. For DWR-based methods, we adopt the code as well as the parameters published by the original authors:
\begin{itemize}
    \item Ordinary Least Square (OLS)~\citep{hutcheson2011ordinary}: 
    \begin{equation*}
    \min \|Y-\mathbf{X} \beta\|_2^2
    \end{equation*}

    \item Lasso~\citep{tibshirani1996regression}:
    \begin{equation*}
    \min \|Y-\mathrm{X} \beta\|_2^2+\lambda_1\|\beta\|_1
    \end{equation*}
    \item Ridge~\citep{hoerl1970ridge}:
    \begin{equation*}
    \min \|Y-\mathrm{X} \beta\|_2^2+\lambda_1\|\beta\|_2
    \end{equation*}
    \item Decorrelated Weighting Regression (DWR)~\citep{kuang2020stable}:
    \begin{small}
    \begin{equation*}
    \begin{array}{ll} 
        & \min _{W, \beta} \sum_{i=1}^n W_i \cdot\left(Y_i-\mathbf{X}_{i,} \beta\right)^2 \\
        \text { s.t } & \sum_{j=1}^p\left\|\mathbf{X}_{, j}^T \boldsymbol{\Sigma}_W \mathbf{X}_{,-j} / n-\mathbf{X}_{, j}^T W / n \cdot \mathbf{X}_{,-j}^T W / n\right\|_2^2<\lambda_2 \\
    \end{array}
    \end{equation*}
    \end{small}
    \item Support Vector Machines (SVM)~\citep{suykens1999least}:
        \begin{equation*}
            \min _{w, b, \zeta, \zeta^*} \frac{1}{2} w^T w+ \sum_{i=1}^n \left(\zeta_i+\zeta_i^*\right)
        \end{equation*}
    \item SVM combined with DWR(DWR\_SVM):
    \begin{small}
    \begin{equation*}
    \begin{array}{ll}
        &\min _{w, b, \zeta, \zeta^*} \frac{1}{2} w^T w+ \sum_{i=1}^n W_{i}\left(\zeta_i+\zeta_i^*\right) \\
        &\text { s.t } \sum_{j=1}^p\left\|\mathbf{X}_{, j}^T \boldsymbol{\Sigma}_W \mathbf{X}_{,-j} / n-\mathbf{X}_{, j}^T W / n \cdot \mathbf{X}_{,-j}^T W / n\right\|_2^2<\lambda_2 \\
    \end{array}
    \end{equation*}
    \end{small}
\end{itemize}

\label{settings}
\textbf{Linear Environment:} \quad For this setting, we construct features $\mathbf{S}$ that causes unstable $\mathbf{V}$ by auxiliary variables $z$ with linear relationship among features only: 
\begin{equation*}
\begin{array}{r}
\mathbf{Z}_{, 1}, \cdots, \mathbf{Z}_{, p} \stackrel{i i d}{\sim} \mathcal{N}(0,1), \mathbf{X}_{, 1}, \cdots, \mathbf{X}_{, p_{v}} \stackrel{i i d}{\sim} \mathcal{N}(0,1) \\
\mathbf{S}_{, i}=0.8 * \mathbf{Z}_{, i}+0.2 * \mathbf{Z}_{, i+1}, i=1,2, \cdots, p_{s}
\end{array}
\end{equation*}
\begin{equation*}
\mathbf{V}_{\cdot, j}=0.8 * \mathbf{X}_{\cdot, j}+0.2 * \mathbf{X}_{\cdot, j+1}+\mathcal{N}(0,1)
\end{equation*}

\textbf{Nonlinear Environment:} \quad In this setting, we combined square relationship and exponential relationship to generate various environment including potential nonlinear confounding to test our reweighted regularizer:
\begin{equation*}
\begin{aligned}
\mathbf{V}_{\cdot, j}&=\mathbf{X}_{\cdot, j} + 0.4 * \mathbf{X}_{\cdot, j+1} + 0.4 * exp(\mathbf{X}_{\cdot, j+1}) \\
&+ 0.4 * \mathbf{X}^{2}_{\cdot, j+1}+ 0.1 * \mathbf{X}^{3}_{\cdot, j+1} + \mathcal{N}(0,1) \\
\mathbf{S}_{\cdot, j} &=\mathbf{Z}_{\cdot, j}+0.4 * \mathbf{Z}_{\cdot, j+1} + 0.4 * exp(\mathbf{Z}_{\cdot, j+1})\\
&+ 0.4 * \mathbf{Z}^{2}_{\cdot, j+1} + 0.1 * \mathbf{Z}^{3}_{\cdot, j+1} + \mathcal{N}(0,1)
\end{aligned}
\end{equation*}

To further test the robustness of our algorithm, we assume that there are unobserved nonlinear terms, and construct the label $Y$ as shown in Equation~\ref{experiment}. Combined with weighed SVM loss function, we train our model to estimate the regression coefficient $\beta$. In this experiment, we set $\beta_{s}=\left\{\frac{1}{3},-\frac{2}{3}, 1,-\frac{1}{3}, \frac{2}{3},-1, \cdots\right\}, \beta_{v}=\overrightarrow{0}$, and $\varepsilon=$ $\mathcal{N}(0,0.3)$. In the experiment, we will set different dimension of $\beta$, hence if the dimension of $\beta_{S}$ is higher than 6, we will set the element of which index is larger than 6 as the $i\%6$-th of $\beta_{V}$.
\begin{equation}
\label{experiment}
Y_{p o l y}=f(\mathbf{S})+\varepsilon=[\mathbf{S}, \mathbf{V}] \cdot\left[\beta_{s}, \beta_{v}\right]^{T}+\mathbf{S}_{\cdot, 1} \mathbf{S}_{\cdot, 2} + \varepsilon
\end{equation}

\noindent\textbf{Heart Disease} is retrieved from the repository of the University of California, Irvine~\citep{asuncion2007uci}. We follow previous work to use 13 of 76 attributes: \emph{Age}, \emph{Sex}, \emph{cp}, \emph{threstbps}, \emph{chol}, \emph{fbs}, \emph{restecg}, \emph{thalach}, \emph{exang}, \emph{oldpeak}, \emph{slope}, \emph{cam} and \emph{thal}. 
\begin{table*}[t]
    \centering
    \caption{Hyperparameter study on the synthetic dataset. $\gamma$ and $\lambda$ are Lagrange penalty operators.}
    \label{hyperparameter}
  \resizebox{0.8\linewidth}{!}{
	\begin{tabular}{c|lcc|lcc|lcc} 
		\toprule
			 & \multicolumn{3}{c|}{$\gamma=600$, $\lambda=0.0001$} & \multicolumn{3}{c|}{$\gamma=600$, $\lambda=0.0005$}   & \multicolumn{3}{c}{$\gamma=600$, $\lambda=0.001$}   \\ 
   \cmidrule(r){2-10}
		& $C=0$ & $C=0.5$       & $C=1$ & $C=0$ & $C=0.5$       & $C=1$& $C=0$ & $C=0.5$       & $C=1$ \\ 
  \midrule
        $\beta_{S}$ Error & 1.956  & 1.919  & 1.996  & 1.769  & \textbf{1.926}  & 2.003  & 1.956  & 2.026  & 2.073   \\ 
        $\beta_{V}$ Error & 0.238  & 0.179  & 0.166  & 0.245  & 0.187  & 0.178  & 0.246  & 0.199  & 0.175   \\ 
        $RMSE$ Error & 4.943  & 4.732  & 4.680  & 4.854  & \textbf{4.726}  & 4.675  & 4.951  & 4.856  & 4.808   \\ 
  \midrule
         & \multicolumn{3}{c|}{$\gamma=800$, $\lambda=0.0001$} & \multicolumn{3}{c|}{$\gamma=800$, $\lambda=0.0005$}   & \multicolumn{3}{c}{$\gamma=800$, $\lambda=0.001$} \\
        \cmidrule(r){2-10}
		& $C=0$ & $C=0.5$   & $C=1$ & $C=0$ & $C=0.5$  & $C=1$& $C=0$ & $C=0.5$       & $C=1$ \\ 
    \midrule

        $\beta_{S}$ Error & 1.954  & 2.022  & 2.070  & 1.784  & 2.025  & 2.068  & 1.960  & 2.019  & 2.009   \\ 
        $\beta_{V}$ Error & 0.240  & 0.197  & 0.172  & 0.234  & 0.195  & 0.176  & 0.245  & 0.195  & 0.174   \\ 
        $RMSE$ Error & 4.945  & 4.859  & 4.825  & 4.849  & 4.860  & 4.793  & 4.961  & 4.858  & 4.674   \\ 

    \midrule
         & \multicolumn{3}{c|}{$\gamma=1000$, $\lambda=0.0001$} & \multicolumn{3}{c|}{$\gamma=1000$, $\lambda=0.0005$}   & \multicolumn{3}{c}{$\gamma=1000$, $\lambda=0.001$} \\
        \cmidrule(r){2-10}
		& $C=0$ & $C=0.5$       & $C=1$ & $C=0$ & $C=0.5$       & $C=1$& $C=0$ & $C=0.5$       & $C=1$ \\ 
    \midrule
        $\beta_{S}$ Error & 1.962  & 2.022  & 2.075  & 1.959  & 1.928  & 2.073  & 1.962  & 2.024  & 2.006   \\ 
        $\beta_{V}$ Error & 0.242  & 0.196  & 0.173  & 0.250  & 0.187  & 0.178  & 0.244  & 0.189  & 0.169   \\ 
        $RMSE$ Error & 4.938  & 4.859  & 4.812  & 4.950  & \textbf{4.726}  & 4.811  & 4.947  & 4.854  & 4.672   \\ 
  \bottomrule
	\end{tabular}

 }
\vspace{-1mm}
\end{table*}

\subsection{Human Evaluation}
\label{humanevaluation}
Quantitative performance does not always align with real-world practice. To ensure the quality, we ask doctors in Cardiology, ENT and Neurosurgery departments (three doctors in each) to rate each extracted rule based on their domain knowledge. Results can be found in Table~\ref{resultaccuracy}. Rule scores by the models are based on feature importance. Spearman Coefficients is adopted to compute rating consistency between our model and doctors. As can be observed, causality rankings of the baseline models vary greatly, indicating their unstable performances. However, our model is able to achieve consistent higher causal values, suggesting a better aligned rating mechanism with human experts and some examples can be found in the Table~\ref{toprules}. In the experiment, we sort the rules in descending order by calculating the importance and show the top five rules compared with the doctor's score in Table~\ref{toprules} (in our main paper). The scoring criteria are as follows:
\begin{itemize}
    \item \textbf{Score 4:} \quad Strongly agree that the rule contains causality.
    \item \textbf{Score 3:} \quad Agree that the rule contains causality.
    \item \textbf{Score 2:} \quad Disagree with this rule.
    \item \textbf{Score 1:} \quad Strongly disagree with this rule.
\end{itemize}

\begin{table*}[h!]
\centering
\caption{Rules filtered by algorithm are sorted in a descending order by our algorithm compared with the scores given by doctors.}
\label{toprules}
\resizebox{0.8\linewidth}{!}{
\begin{tabular}{llllllc}
\toprule
\multicolumn{6}{l}{Association Rules}  & Scores  \\ \midrule
\multicolumn{6}{l}{\textbf{Heart Disease}} & \\
\multicolumn{6}{l}{age middle, \#major vessels0, fixed defect, pressure normal, ST-T wave abnormality $\Rightarrow$ heart disease}  &  4  \\
 \multicolumn{6}{l}{age middle, cholesterol edge, \#major vessels0, lower than 120mg/ml $\Rightarrow$ heart disease}  & 3  \\
\multicolumn{6}{l}{non-anginal pain, cholesterol high, no exercise induced angina $\Rightarrow$ heart disease}  & 4  \\
\multicolumn{6}{l}{ST-T wave abnormality, downsloping $\Rightarrow$ heart disease}  & 4  \\
 \multicolumn{6}{l}{fixed defect, \#major vessels0, cholesterol edge $\Rightarrow$ heart disease}  & 4  \\
 \midrule
 \multicolumn{6}{l}{\textbf{Esophageal Cancer}} & \\
 \multicolumn{6}{l}{Modified Ryan Score 2.0, Esophagectomy Procedure 4 $\Rightarrow$ recurrence} & 2  \\ 
 \multicolumn{6}{l}{tobacco use, Alcohol Use, Neoadjuvant Radiation, Histological Grade 2, Final Histology 1  $\Rightarrow$ recurrence} & 4  \\ 
\multicolumn{6}{l}{Histological Grade 3, Neoadjuvant Radiation, Esophagectomy Procedure 4, Final Histology 1 $\Rightarrow$ recurrence}  & 4  \\ 
\multicolumn{6}{l}{clinical m Stage 1, Histological Grade 3, Neoadjuvant Radiation, Esophagectomy Procedure 4, Final Histology 1 $\Rightarrow$ recurrence}  & 4  \\ 
\multicolumn{6}{l}{esoph tumor location 4, Esophagectomy Procedure 5, Histological Grade 3 $\Rightarrow$ recurrence}   & 3  \\
 \midrule
\multicolumn{6}{l}{\textbf{Cauda Equina Syndrome}} & \\
\multicolumn{6}{l}{elixsum, beds, procedure 03 09 $\Rightarrow$ die360}  & 4  \\ 
\multicolumn{6}{l}{Emergency, diagnosis 344 60, complication 240days $\Rightarrow$ die360}  & 4  \\ 
\multicolumn{6}{l}{diagnosis 344 60, life threatening, complication 240days $\Rightarrow$ die360}  & 4  \\ 
\multicolumn{6}{l}{if aa $\Rightarrow$ die360} & 4  \\ 
\multicolumn{6}{l}{or potentially disabling conditions, complication 240days $\Rightarrow$ die360}  & 4  \\ 
\midrule
\end{tabular}
}
\end{table*}

\end{document}